\newcommand{\cmt}[1]{\bgroup\hfill$\triangleright$~#1\egroup}
\newcommand*\circled[1]{\tikz[baseline=(char.base)]{
        \node[shape=circle,fill,inner sep=.9pt] (char) {\texttt{\textcolor{white}{#1}}};}}
\newtheorem{theorem}{Theorem}
\newtheorem{definition}{Definition}
\title{TinyLight: Adaptive Traffic Signal Control on Devices with Extremely Limited Resources}
\author{
    Dong Xing$^1$\and
    Qian Zheng$^1$\and
    Qianhui Liu$^{1,2}$\And
    Gang Pan$^{1,}$\footnote{Gang Pan is the corresponding author. }
    \affiliations
    $^1$College of Computer Science and Technology, Zhejiang University, Hangzhou, China\\
    $^2$Department of Electrical and Computer Engineering, National University of Singapore, Singapore
    \emails
    \{dongxing, qianzheng\}@zju.edu.cn, 
    qhliu@nus.edu.sg,
    gpan@zju.edu.cn
}
\begin{document}
    
    \maketitle
    
    \begin{abstract}
        Recent advances in deep reinforcement learning (DRL) have largely promoted the performance of adaptive traffic signal control (ATSC). Nevertheless, regarding the implementation, most works are cumbersome in terms of storage and computation. This hinders their deployment on scenarios where resources are limited. In this work, we propose TinyLight, the first DRL-based ATSC model that is designed for devices with extremely limited resources. TinyLight first constructs a super-graph to associate a rich set of candidate features with a group of light-weighted network blocks. Then,  to diminish the model's resource consumption, we ablate edges in the super-graph automatically with a novel entropy-minimized objective function. This enables TinyLight to work on a standalone microcontroller with merely 2KB RAM and 32KB ROM. We evaluate TinyLight on multiple road networks with real-world traffic demands. Experiments show that even with extremely limited resources, TinyLight still achieves competitive performance. The source code and appendix of this work can be found at \url{https://bit.ly/38hH8t8}. 
    \end{abstract}
    
    \section{Introduction}
    
    The latest progress of adaptive traffic signal control (ATSC) presents two different situations from views of {research} and {practice}. From the view of {research}, many studies on ATSC have shown superior performance over traditional fixed-cycle solutions \cite{survey_rl_on_atsc}. Especially, with the development of deep reinforcement learning (DRL)  \cite{xing21}, works on DRL-based ATSC exhibit great potential when applying them to intersections with various settings. From the view of {practice}, however, the deployment rate of ATSC in reality is still far from satisfactory. In 2019, less than 5\% of signalized intersections in the USA adopt ATSC systems \cite{tang2019global}. In developing regions, the application of ATSC is more uncommon \cite{ecoLight_Chauhan}. As traffic signal control is a field with broad application background, how to turn research outcomes into practice is critical. 
    
    One key factor impeding the application of ATSC is its high expense. For example, the cost of ATSC in Florida ranges from \$30,000 to \$96,400 per intersection \cite{elefteriadou2019before}. Recent studies on tiny machine learning enable simple algorithms such as kNN to work on cheap microcontroller units (MCUs) \cite{branco2019machine}. This inspires us to adopt such devices to partially reduce the cost of ATSC.\footnote{We choose MCU as it is a representative device with extremely limited resources, but our work is \textit{not} restricted to this device. } An MCU encapsulates processors, memory and I/O peripherals into a single chip. Comparing with a general workstation, the resources on MCUs are much limited (\textit{e.g.}, most MCUs have only KB of memory and mediocre clock rate). This makes its price relatively low (less than \$5 in our case) and therefore affordable in many practical scenarios. Nevertheless, its limitation on resource also poses new constraints on the {storage} and {computation} of ATSC implementations. 
    
    By interacting with the environment to adapt to real-time traffic flows, recent DRL-based solutions achieve superior performance over traditional ones. However, when considering their deployment, most works do not take into account the issue of resource consumption, leaving their performance on MCUs (or other embedding devices) undetermined. From the {storage} aspect, many works rely on complex network structures to maintain the representation power, and their memory requirement often exceeds the capacity of a single MCU. From the {computation} aspect, it is common for existing works to involve massive parallel operations, which greatly affect their response time if GPU is not available. These factors are practical during the deployment. However, we observe that they were rarely discussed in prior works. 
    
    To address these issues, we propose TinyLight, the first DRL-based ATSC model that is designed for devices with extremely limited resources. Specifically, TinyLight first constructs a super-graph to associate a rich set of candidate features with a group of light-weighted network blocks. This super-graph covers a wide range of sub-graphs that are compatible with resource-limited devices. Then, we ablate edges in the super-graph automatically with a novel entropy-minimized objective function, which greatly diminishes the resource consumption of TinyLight. In our experiments, the model we obtained takes only 4KB ROM. However, it still reaches competitive performance within 0.1s response time on devices with extremely low clock rate (8MHz). 
    
    As a demonstrative prototype of our model's deployment, we implement TinyLight on a standalone ATmega328P -- an MCU with merely 2KB RAM and 32KB ROM. This MCU is readily available on the market and costs less than \$5, making our work applicable in scenarios with low budgets. We evaluate TinyLight on multiple road networks with real-world traffic flows. Experiments show that even with extremely limited resources, TinyLight still achieves competitive performance. In this way, our work contributes to the transformation of ATSC research into practice. Although we focus on the task of traffic signal control, our methodology can be generalized to many fields that are also cost sensitive \cite{LiuRXT020}. 
    
    \section{Related Works}
    
    This section discusses related works of ATSC from both perspectives of research (works with no hardware involved) and practice (works with hardware involved). 
    
    \subsection{ATSC Research}
    
    Many pioneering works on ATSC are manually designed by experts in transportation engineering. For example, SOTL \cite{sotl} counts vehicles with or without the right of way and compares them with empirical thresholds. MaxPressure \cite{max_pressure} assumes downstream lanes have infinite capacity and greedily picks phase that maximizes the intersection's throughput. These works highly depend on human experience in transportation systems, making them too rigid for traffic flows that change dynamically. 
    
    Instead, DRL-based solutions directly learn from traffic flows and adapt to specific intersections more flexibly. To determine the next phase for a single intersection, \cite{frap_zheng} compares phases in pairs. \cite{attendlight} applies attention on both lanes and phases, and \cite{jiang_dynamic_lane} extends this idea to dynamic lanes. 
    For scenarios with multiple intersections, \cite{colight_wei} and \cite{macar} incorporate the influences of neighboring intersections with graph networks. \cite{rizzo} develops time critic policy gradient methods for congested networks. \cite{mplight_aaai} uses pressure to coordinate signals in region-level. Armed with judiciously designed networks, these models achieve superior performance over traditional ones. However, most of them are cumbersome in storage or computation, which impedes their deployment. In contrast, TinyLight is implemented with moderate scales of storage and computation. This enables our model to work on devices with limited resources, making it more practical for the deployment.
    
    \subsection{ATSC Practice}
    
    A number of commercial ATSC systems \cite{scoot} have been adopted in modern cities. However, the high costs of these systems constrain their popularity in most regions with tight budgets. Several works manage to narrow the gap between simulation and reality. They achieve this by designing communication platforms to evaluate the performance of general ATSC models in real-world environments \cite{Abdelgawad15}. However, these works do not change the implementation of ATSC models, leaving the cost of deployment still expensive. EcoLight \cite{ecoLight_Chauhan} is a threshold-based model for scenarios with primitive conditions. The authors first design a light-weighted network and then approximate it with a set of empirical thresholds to simplify the deployment. However, this process relies heavily on human expertise in the local traffic conditions, making it hard to generalize to other intersections. In comparison, the formation of TinyLight is fully data-driven, which greatly mitigates the model's dependency on manual operations. 
    
    Our work has also been influenced by recent works on neural architecture search \cite{elsken2019neural}. Nevertheless, existing studies in this field usually do not position the ultimate model size in the first place. This constraint, however, is the main focus of TinyLight. Unlike previous works, our design for the super-graph only involves light-weighted network blocks, and our proposed entropy-minimized objective function proactively ablates edges in the super-graph to a great extent. This makes our final model extremely tiny and therefore can  even work on devices with only KB of memory.

    \section{Preliminaries} 
    
    \begin{figure}
        \centering
        \includegraphics[width=0.8\linewidth]{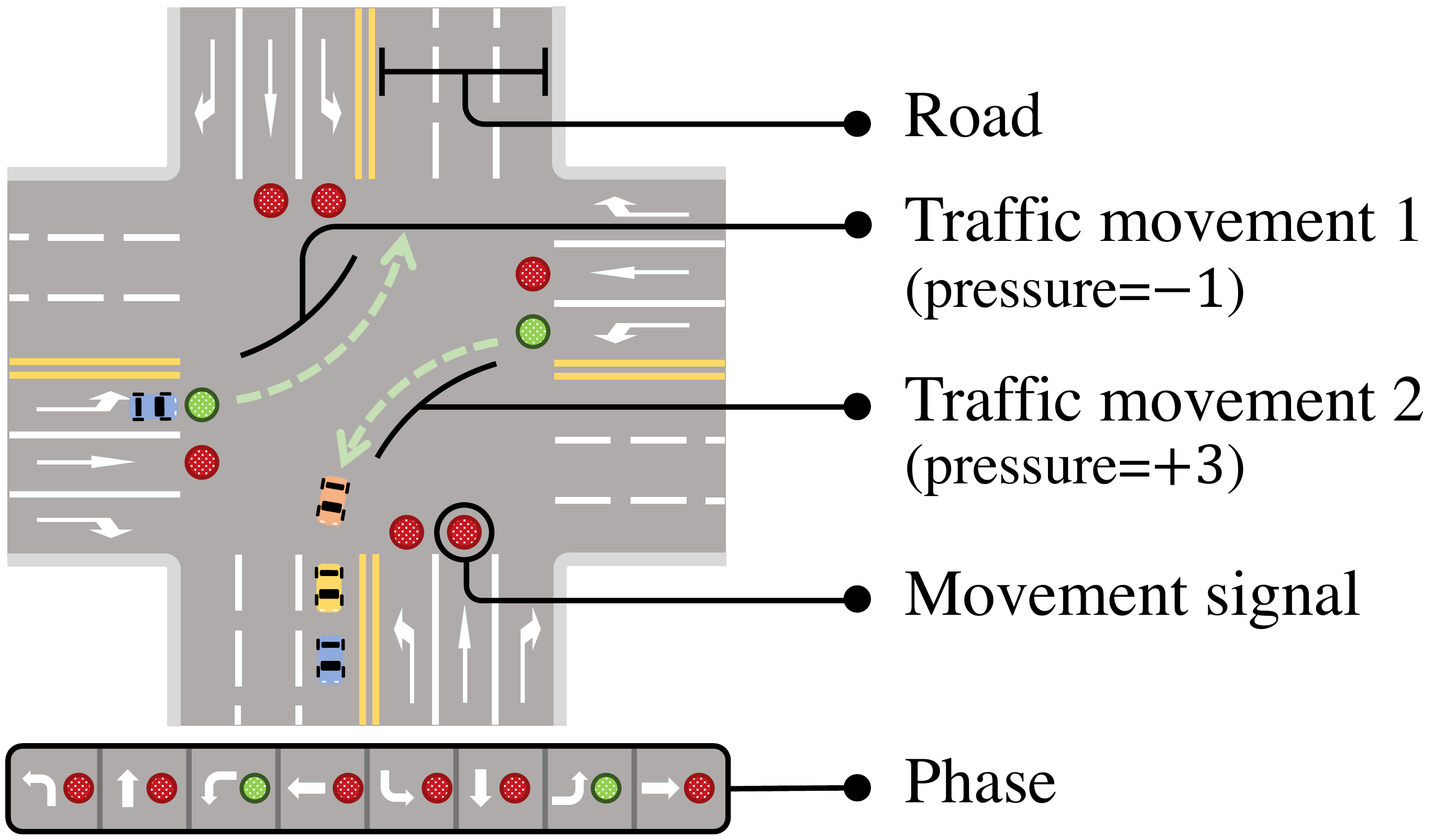}
        \caption{An illustration of the ATSC environment. }
        \label{fig:intersection}
    \end{figure}
    
    This section provides the definition of related terms and a formal description of the ATSC problem setting. 
    
    \begin{figure*}[!ht]
        \includegraphics[width=0.99\linewidth]{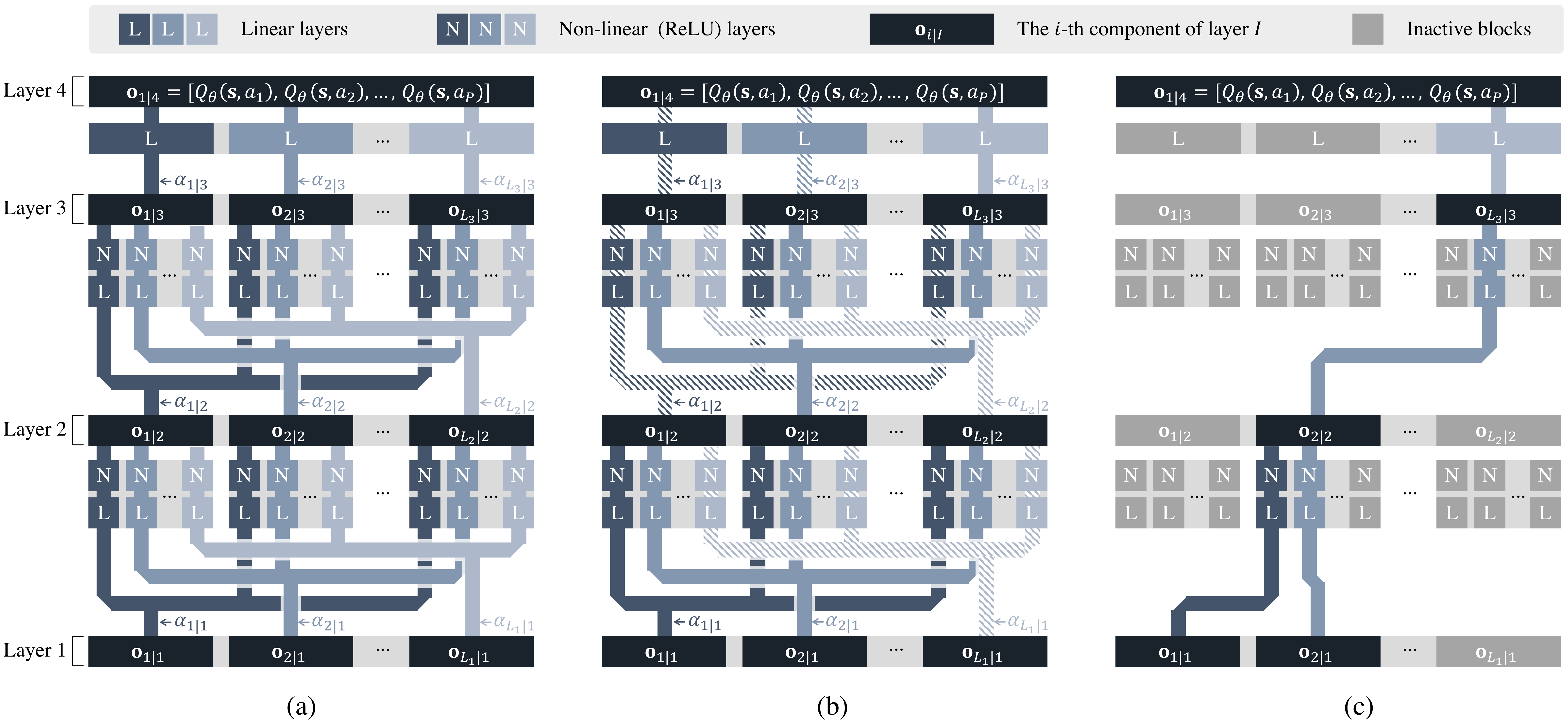}
        \caption{The formation process of TinyLight's final model. (a) The super-graph; (b) The intermediate result during edge ablating (stripped line indicates that the value of corresponding $\alpha$ is relatively smaller); (c) The sub-graph (\textit{i.e.} the policy network).  }
        \label{fig:model_nas}
    \end{figure*}
    
    \subsection{Term Definitions}
    
    We use a four-way intersection (depicted in Figure \ref{fig:intersection}) as an example to introduce the following terms.
    A \textit{road} is a wide way where vehicles can drive. A \textit{traffic movement} refers to a vehicle flow that starts from an incoming road and ends at an outgoing road. A \textit{movement signal} determines if a traffic movement has the right of way to pass through. A \textit{phase} is a combination of movement signals that do not conflict with each other. The \textit{pressure} measures the net difference between vehicles on the incoming and outgoing roads of a traffic movement. 
    These definitions are consistent with previous works \cite{survey_rl_on_atsc} and can be naturally extended to other intersection structures.

    \subsection{Problem Setting}
    
    Our problem follows the  setting of existing DRL-based works \cite{survey_rl_on_atsc}. At time step $t$, the model collects its state $\mathbf{s}_t$ and reward $r_t$ from the environment, based on which it determines a phase $a_t$ from the collection of $|P|$ valid phases to allow some traffic movements to pass through. The chosen phase is executed after 3 seconds of yellow signal to let drivers have enough time to prepare. The target of our problem is to maximize the cumulative discounted rewards $\mathbb{E} [\sum_t \gamma ^t r_t]$, where $\gamma$ is the discount factor of future return. For the implementation, we choose negative pressure over intersection at time $t$ as the instant reward $r_t$, which 
    suggests that intersections with lower pressures are more favored. This reward is correlated with long term performance indexes such as average travel time and throughput \cite{presslight_wei}.

    \section{Method}
    
    This section presents the details of our model. We first describe the implementation of TinyLight in Section \ref{sec:tinylight}, and then discuss its resource consumption in Section \ref{sec:resource}. 
    
    \subsection{The TinyLight Model} \label{sec:tinylight}
    
    \subsubsection{The Super-Graph} 
    Recall that our goal is to automatically extract a tiny but well-performed sub-graph from a super-graph, making it applicable on devices with extremely limited resources. Therefore, we expect the super-graph should cover a large quantity of tiny sub-graphs that have miscellaneous representation emphases. To meet this requirement, we implement a multi-scale feature extractor to represent a rich set of candidate features. This procedure depicts the road network from various scales, including (incoming and outgoing) lanes, roads, traffic movements, phases and the intersection. On each scale, a variety of traffic statistics are collected, such as the number of vehicles, the sum of their waiting time and so on. In consequence, 
    nearly 40 types of candidate features are prepared for the upcoming procedure of sub-graph extraction, which constitutes a comprehensive representation for the road network.\footnote{Appendix A provides the full list of our covered features. }  
    
    To further exploit these features, we construct a four-layer network structure, depicted in Figure \ref{fig:model_nas}a. The 1st layer is reserved for our candidate features. The 2nd and 3rd layers extract high-level representations. The last layer produces the phase. Each layer $I$ has $L_{I}$ parallel components (denoted as black rectangles) whose output dimensions differ from each other. During sub-graph extraction, only components that are most suitable for the current intersection are remained. For layer $J \in \{2, 3, 4\}$, its $j$-th component accumulates the transformation of all components from its previous layer $I$. Every single transformation involves a linear function and a non-linear (ReLU) function, which are light-weighted and can be implemented on resource-limited devices (c.f. Section \ref{subsec:analysis_resource}). It should be mentioned that the structure of our super-graph is not fixed and can be easily enhanced with deeper layers or stronger network blocks if the resource is abundant.
    
    \begin{figure*}[!t]
        \centering
        \subfloat[Hangzhou-1]{
            \setlength{\fboxsep}{0.5pt}\fbox{\includegraphics[width=0.153\linewidth]{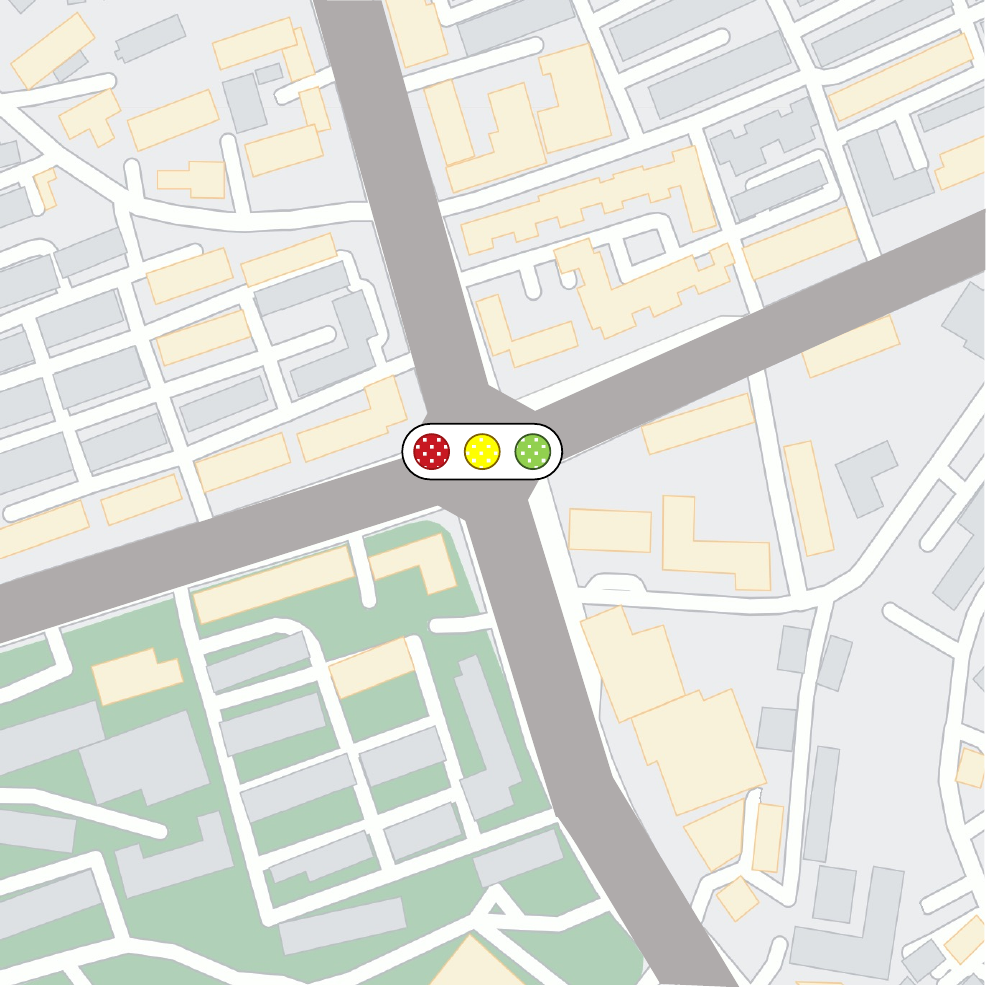}}
        }
        \subfloat[Hangzhou-2]{
            \setlength{\fboxsep}{0.5pt}\fbox{\includegraphics[width=0.153\linewidth]{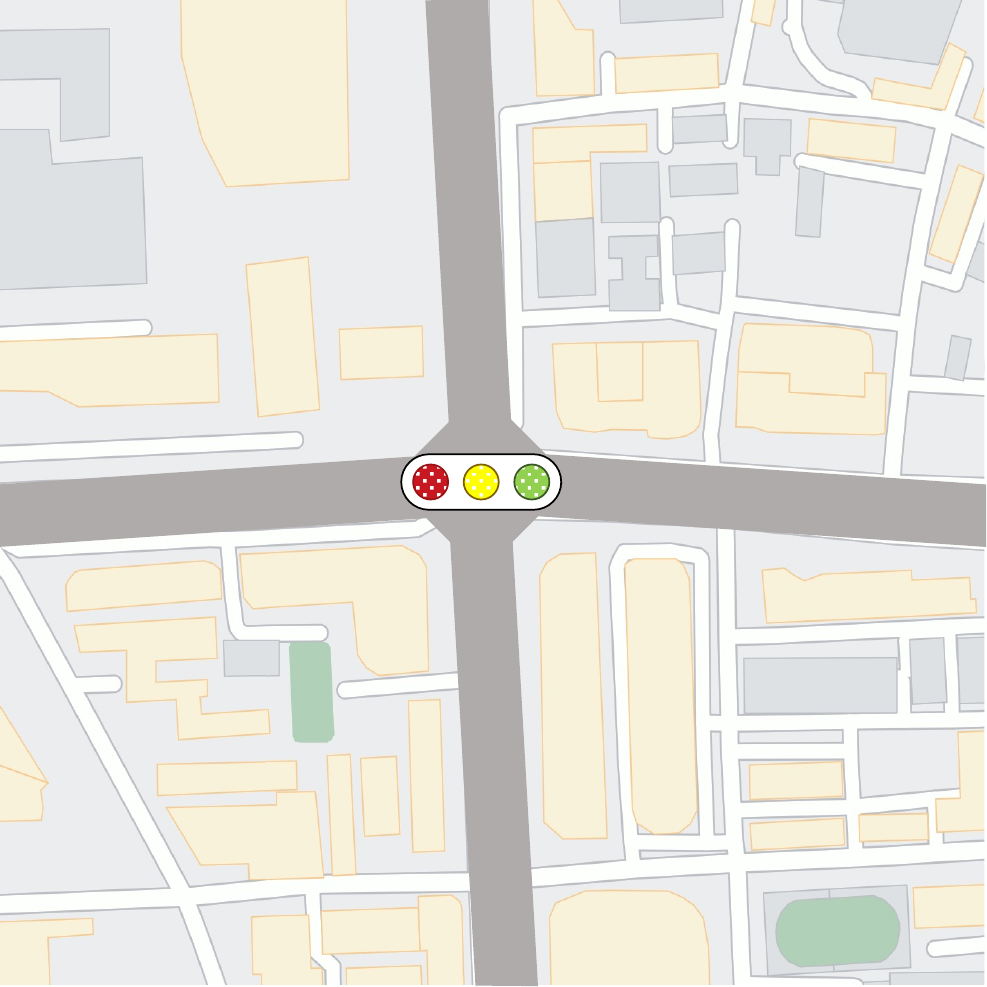}}
        }
        \subfloat[Hangzhou-3]{
            \setlength{\fboxsep}{0.5pt}\fbox{\includegraphics[width=0.153\linewidth]{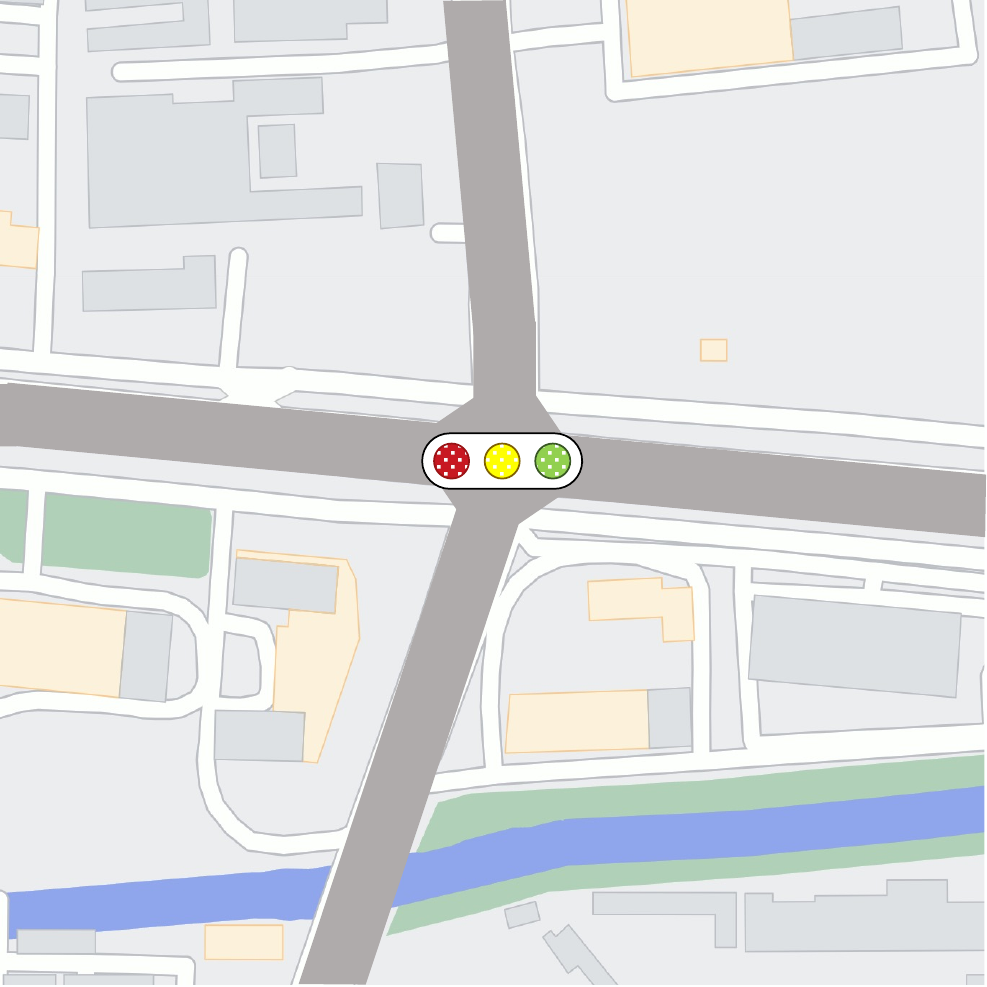}}
        }
        \subfloat[Atlanta]{
            \setlength{\fboxsep}{0.5pt}\fbox{\includegraphics[width=0.153\linewidth]{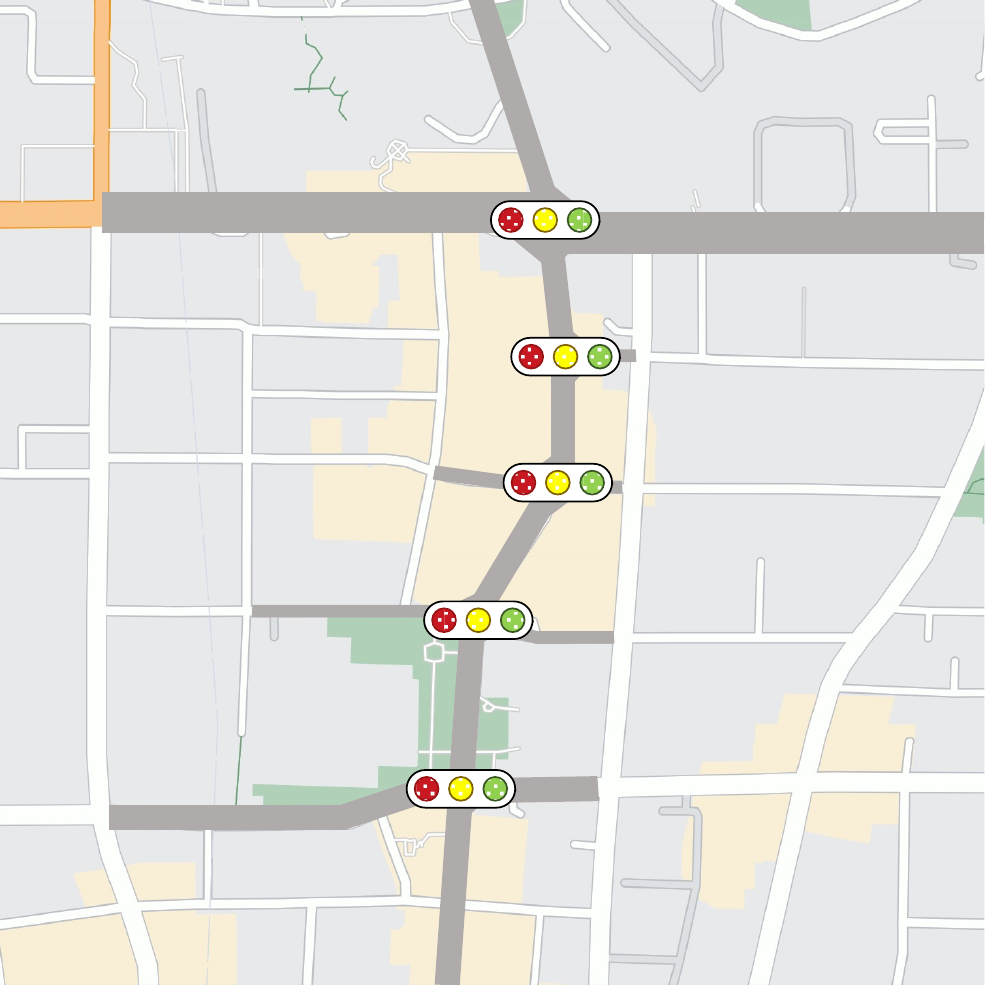}}
        }
        \subfloat[Jinan]{
            \setlength{\fboxsep}{0.5pt}\fbox{\includegraphics[width=0.153\linewidth]{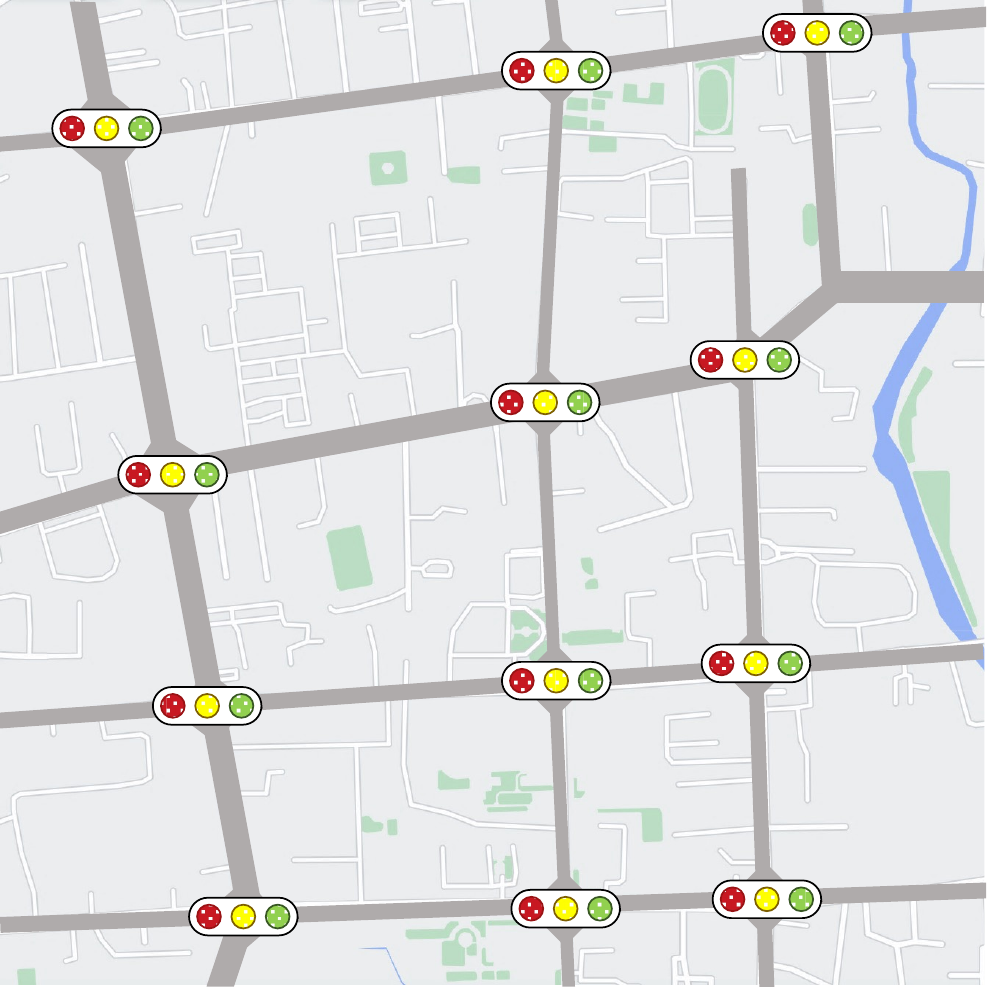}}
        }
        \subfloat[Los Angeles]{
            \setlength{\fboxsep}{0.5pt}\fbox{\includegraphics[width=0.153\linewidth]{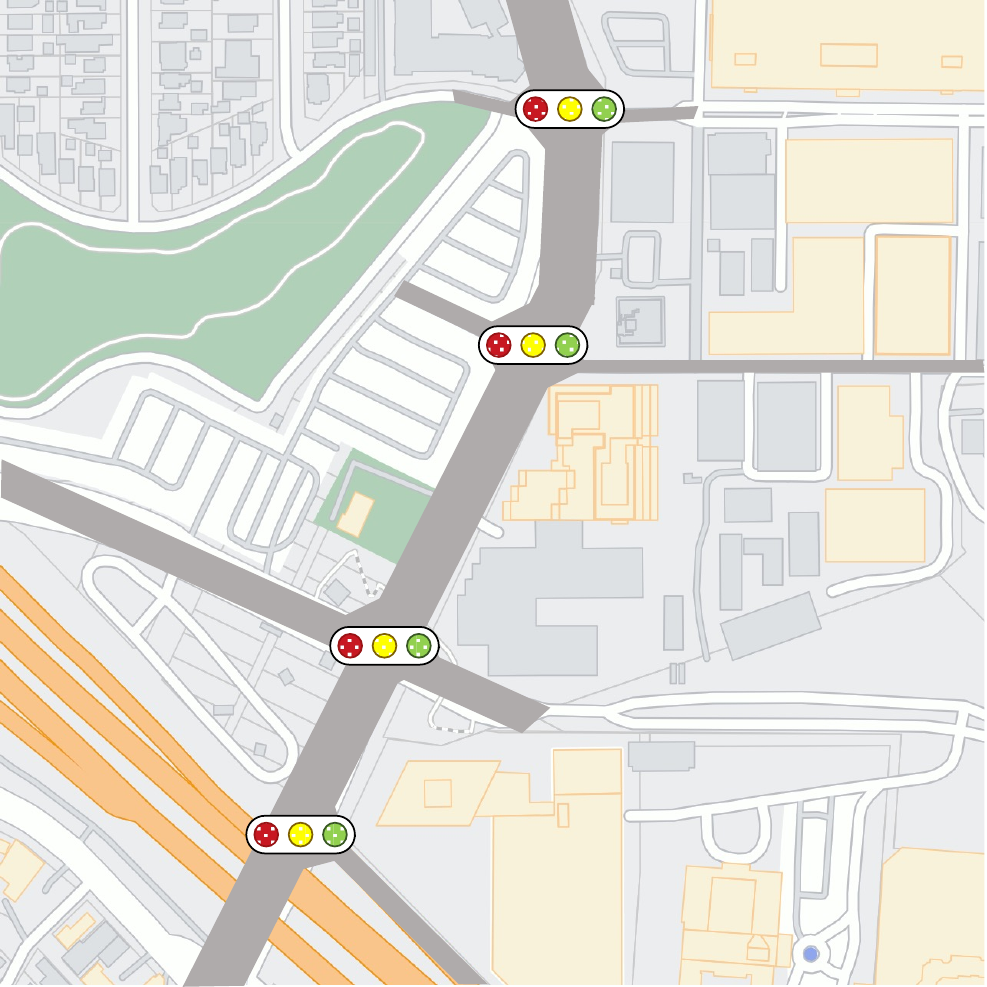}}
        }
        \caption{The road networks for evaluation. (a) Baochu Rd. and Tiyuchang Rd., Hangzhou; (b) Qingchun Rd. and Yan'an Rd., Hangzhou; (c) Tianmushan Rd. and Xueyuan Rd., Hangzhou; (d) Peachtree St., Atlanta; (e) Dongfeng Dist., Jinan; (f) Lankershim Blvd., Los Angeles. }
        \label{fig:dataset}
    \end{figure*}
    
    \subsubsection{The Sub-Graph}\label{sec:the sub-graph}
    To enable sub-graph extraction, we let all transformations that start from a common component (e.g., the $i$-th component of layer $I$ where $I \in \left\{1, 2, 3 \right\}$) be weighted by a shared non-negative parameter $\alpha_{i | I}$. On each layer $I$, the sum of all $\alpha_{i|I}$ equals to 1 (ensured by a softmax function). We denote the output of $j$-th component from layer $J$ as $\mathbf{o}_{j|J}$. Its relationship with $\mathbf{o}_{i|I}$ from the previous layer $I$ is represented as: 
    \begin{equation*}
        \mathbf{o}_{j| J} = \sum_{i} \alpha_{i | I} \cdot 
        \operatorname{ReLU} 
        \left(
        \operatorname{Linear}_{\theta} 
        \left(
        \mathbf{o}_{i | I}
        \right) 
        \right)
    \end{equation*}
    where $\theta$ denotes parameters of linear function. After extraction, most $\alpha$ will be set to 0, and their corresponding edges will be ablated. Since we restrict the sum of $\alpha_{i | I}$ to be 1 on each layer, there is at least one path that traverses from input to output layer, which guarantees the existence of our final sub-graph (proved in Appendix B.2). In addition, this design enables many sub-graphs to co-exist in the same super-graph. Even if the number of remaining connections on each layer is restricted to one, the proposed super-graph still covers a total amount of $\prod_I \left|L_I\right| $ sub-graphs. 
    
    \subsubsection{Two Parallel Objectives}\label{sec:two parallel objectives}
    
    Given the above settings, there are two groups of parameters to be optimized. The first is $\theta$ (parameters of all linear functions), which determines the performance of our sub-graph. The second is $\alpha$ (weights of all edges), which controls the model's sparsity. As our goal is to obtain a model that is both (i) well-performed and (ii) tiny in size, it can be naturally represented as a multi-objective optimization problem. 
    
    The first objective is to improve the performance of our sub-graph. This is made possible by minimizing: 
    \begin{equation*}
        \mathcal{L}_{\theta, \alpha}^{(1)} = 
        \mathbb{E} \left[
        \left(
        r_t + \gamma \max_{a_{t'}} Q_{\theta,\alpha} \left(
        \mathbf{s}_{t'}, a_{t'}
        \right)
        - Q_{\theta,\alpha} \left(
        \mathbf{s}_t, a_t 
        \right)
        \right) ^ 2
        \right]
    \end{equation*}
    which is the objective of value-based DRL under the structure defined by $\alpha$. Following convention, we use $Q_{\theta, \alpha}\left(\mathbf{s}_{t}, a_{t} \right)$ to denote the expected return when picking phase $a_t$ at state $\mathbf{s}_t$. It is an alias for the $a_t$-th dimension of $\mathbf{o}_{1 | I=4}$. By this objective, both $\theta$ and $\alpha$ are optimized to maximize the $Q$-value, which corresponds to models with better performance. 
    
    The second objective is to ablate edges in the super-graph proactively to diminish the size of our ultimate sub-graph. An ideal choice is to minimize the $\mathcal{L}_0$-norm of $\alpha$. However, this target is unfortunately intractable. To facilitate the optimization, we propose to minimize the sum of entropy of $\alpha_{i | I}$ on each layer $I$, which is represented as: 
    \begin{equation*}
        \mathcal{L}_{\alpha}^{(2)} 
        = \sum_{I} \mathcal{H} \left(\alpha_{ \cdot | I} \right)
        = 
        \sum_{I} \sum_{i} - \alpha_{i | I} \cdot \log \left(\alpha_{i | I}\right) 
    \end{equation*}
    Theoretically, $\mathcal{L}_{\alpha}^{(2)}$ is minimized when there is only one instance of $\alpha$ remained on each layer (proved in Appendix B.3). This makes the final size of our sub-graph controllable, even when the super-graph is designed to be enormously large. Comparing with $\mathcal{L}_0$-norm of $\alpha$, this objective is differentiable and therefore enables most gradient-based optimizers. 
    
    \subsubsection{The Optimization}
    By combining the aforementioned two parallel objectives with one Lagrangian function, we can reformulate our goal as the following entropy-minimized objective function:
    \begin{equation*}
        \mathcal{L}_{\theta, \alpha} = \mathcal{L}_{\theta, \alpha}^{(1)} + \beta \mathcal{L}_{\alpha}^{(2)}, ~\text{ s.t. }~ \beta \geq 0 
    \end{equation*}
    where $\beta$ is a non-negative Lagrangian multiplier. We perform the parameter updating with dual gradient descent, which alternates between the optimization of $\theta$ and $\alpha$. In this process, the $\alpha$ proactively ablates connections in the super-graph (Figure \ref{fig:model_nas}b), and the $\theta$ adjusts behavior of our sub-graph given its current structure. At the end of optimization, we remove most edges with negligible weights and eliminate all  network blocks that are not in the path to the output layer. We retain two features in layer 1 for a fair comparison with our comparing baselines. Nevertheless, to manifest that our model is tiny in size, we retain only one component in both layer 2 and 3. This determines the ultimate structure of our sub-graph, which is denoted in Figure \ref{fig:model_nas}c. Theoretically, its representation power is guaranteed by the universal approximation theorem \cite{universal_approximation_theorem}. Empirically, we observe that its performance remains competitive even when comparing with many strong baselines.\footnote{Appendix B provides more detailed settings for training. } 
    
    \begin{table*}[!t]
        \centering
        \begin{tabular}{l
                p{.07\linewidth}<{\raggedleft}@{}p{.05\linewidth}<{\raggedright}
                p{.06\linewidth}<{\raggedleft}@{}p{.05\linewidth}<{\raggedright}            
                c 
                p{.07\linewidth}<{\raggedleft}@{}p{.05\linewidth}<{\raggedright}
                p{.06\linewidth}<{\raggedleft}@{}p{.05\linewidth}<{\raggedright}            
                c 
                p{.07\linewidth}<{\raggedleft}@{}p{.05\linewidth}<{\raggedright}
                p{.06\linewidth}<{\raggedleft}@{}p{.05\linewidth}<{\raggedright}@{}            
            }
            \toprule 
            & \multicolumn{4}{c}{Hangzhou-1} && \multicolumn{4}{c}{Hangzhou-2} && \multicolumn{4}{c}{Hangzhou-3} \\
            \cmidrule{2-5} \cmidrule{7-10} \cmidrule{12-15}
            & \multicolumn{2}{c}{Travel Time} 
            & \multicolumn{2}{c}{Throughput}
            &
            & \multicolumn{2}{c}{Travel Time} 
            & \multicolumn{2}{c}{Throughput}
            &
            & \multicolumn{2}{c}{Travel Time} 
            & \multicolumn{2}{c}{Throughput} 
            \\ 
            & \multicolumn{2}{c}{\small{(sec./veh.)}}  
            & \multicolumn{2}{c}{\small{(veh./min.)}}
            &
            & \multicolumn{2}{c}{\small{(sec./veh.)}}  
            & \multicolumn{2}{c}{\small{(veh./min.)}}
            &
            & \multicolumn{2}{c}{\small{(sec./veh.)}}  
            & \multicolumn{2}{c}{\small{(veh./min.)}}   \\ 
            \midrule 
            EcoLight
            & 196.82& $_{\pm \text{18.69}}$
            & 29.73& $_{\pm \text{0.54}}$
            & 
            & 135.33& $_{\pm \text{2.33}}$
            & 23.16& $_{\pm \text{0.09}}$
            & 
            & 100.45& $_{\pm \text{2.42}}$
            & 28.67& $_{\pm \text{0.07}}$
            \\        
            FixedTime 
            &  282.68&  $_{\pm \text{2.01}}$
            &  27.59&  $_{\pm \text{0.04}}$
            &
            &  135.89 &  $_{\pm \text{1.94}}$
            & 23.23 &  $_{\pm \text{0.03}}$
            &
            & 418.05& $_{\pm \text{1.85}}$
            & 23.54& $_{\pm \text{0.03}}$
            \\ 
            MaxPressure
            & 121.23& $_{\pm \text{3.07}}$
            & 31.19& $_{\pm \text{0.09}}$
            & 
            & 138.72 &  $_{\pm \text{1.75}}$
            & 23.10 & $_{\pm \text{0.05}}$
            & 
            & 103.98& $_{\pm \text{2.41}}$
            & 28.62& $_{\pm \text{0.05}}$
            \\
            SOTL
            & 250.58& $_{\pm \text{3.66}}$
            & 28.69& $_{\pm \text{0.06}}$
            & 
            & 136.74& $_{\pm \text{2.02}}$
            & 23.21& $_{\pm \text{0.05}}$
            & 
            & 143.35& $_{\pm \text{3.55}}$
            & 28.32& $_{\pm \text{0.04}}$
            \\
            \midrule
            CoLight
            & \multicolumn{2}{c}{-} 
            & \multicolumn{2}{c}{-}
            & 
            & \multicolumn{2}{c}{-}
            & \multicolumn{2}{c}{-}
            & 
            & \multicolumn{2}{c}{-}
            & \multicolumn{2}{c}{-}
            \\
            FRAP
            & 129.65& $_{\pm \text{45.77}}$
            & 30.79& $_{\pm \text{1.21}}$
            & 
            & 137.02& $_{\pm \text{34.42}}$
            & 23.09& $_{\pm \text{0.37}}$
            & 
            & 107.74& $_{\pm \text{74.03}}$
            & 28.35& $_{\pm \text{1.20}}$
            \\
            MPLight
            & 128.61& $_{\pm \text{62.28}}$
            & 30.81& $_{\pm \text{1.16}}$
            & 
            & 121.87& $_{\pm \text{1.28}}$
            & 23.24& $_{\pm \text{0.06}}$
            & 
            & 82.48& $_{\pm \text{1.26}}$
            & 28.73& $_{\pm \text{0.05}}$
            \\
            TLRP
            & 152.88& $_{\pm \text{89.03}}$
            & 30.54& $_{\pm \text{1.38}}$
            & 
            & 126.70& $_{\pm \text{11.30}}$
            & 23.20& $_{\pm \text{0.15}}$
            & 
            & 83.98& $_{\pm \text{6.61}}$
            & 28.72& $_{\pm \text{0.07}}$
            \\
            \midrule
            TinyLight
            & \textbf{102.87} & $_{\pm \text{2.98}}$
            & \textbf{31.36}& $_{\pm \text{0.05}}$
            & 
            & \textbf{121.00}& $_{\pm \text{0.85}}$
            & \textbf{23.25}& $_{\pm \text{0.04}}$
            & 
            & \textbf{81.79}& $_{\pm \text{1.10}}$
            & \textbf{28.74}& $_{\pm \text{0.07}}$
            \\
            \bottomrule
            \toprule 
            & \multicolumn{4}{c}{Atlanta} &
            & \multicolumn{4}{c}{Jinan} &
            & \multicolumn{4}{c}{Los Angeles} 
            \\
            \cmidrule{2-5} \cmidrule{7-10} \cmidrule{12-15}
            & \multicolumn{2}{c}{Travel Time} 
            & \multicolumn{2}{c}{Throughput}
            &
            & \multicolumn{2}{c}{Travel Time} 
            & \multicolumn{2}{c}{Throughput}
            &
            & \multicolumn{2}{c}{Travel Time} 
            & \multicolumn{2}{c}{Throughput} 
            \\ 
            & \multicolumn{2}{c}{\small{(sec./veh.)}}  
            & \multicolumn{2}{c}{\small{(veh./min.)}}
            &
            & \multicolumn{2}{c}{\small{(sec./veh.)}}  
            & \multicolumn{2}{c}{\small{(veh./min.)}}
            &
            & \multicolumn{2}{c}{\small{(sec./veh.)}}  
            & \multicolumn{2}{c}{\small{(veh./min.)}}   \\ 
            \midrule 
            EcoLight
            & 303.07& $_{\pm \text{14.93}}$
            & 47.63& $_{\pm \text{5.26}}$
            & 
            & 384.43& $_{\pm \text{9.64}}$
            & 92.08& $_{\pm \text{0.79}}$
            & 
            & 659.01& $_{\pm \text{16.36}}$
            & 19.53& $_{\pm \text{1.12}}$
            \\
            FixedTime 
            & 297.13 & $_{\pm \text{3.19}}$  
            & 49.26 & $_{\pm \text{0.53}}$
            &
            &  457.21& $_{\pm \text{2.12}}$
            &  86.27& $_{\pm \text{0.17}}$
            &
            & 682.55 & $_{\pm \text{2.59}}$
            & 17.76 & $_{\pm \text{0.27}}$
            \\ 
            MaxPressure
            & 261.01& $_{\pm\text{4.20}}$
            & 59.44 & $_{\pm\text{1.51}}$
            & 
            & 340.13 & $_{\pm \text{1.69}}$
            & 94.76& $_{\pm \text{0.19}}$
            & 
            & 587.63 & $_{\pm \text{30.92}}$
            & 23.32& $_{\pm \text{2.69}}$
            \\
            SOTL
            & 416.88& $_{\pm \text{8.13}}$
            & 8.46 & $_{\pm \text{2.22}}$
            & 
            & 424.67& $_{\pm \text{2.44}}$
            & 90.02& $_{\pm \text{0.22}}$
            & 
            & 624.19& $_{\pm \text{29.48}}$
            & 21.81& $_{\pm \text{3.99}}$
            \\
            \midrule
            CoLight
            & \multicolumn{2}{c}{-} 
            & \multicolumn{2}{c}{-} 
            & 
            & 856.53& $_{\pm \text{451.32}}$
            & 57.96& $_{\pm \text{30.08}}$
            & 
            & \multicolumn{2}{c}{-}  
            & \multicolumn{2}{c}{-} 
            \\
            FRAP
            & 258.21& $_{\pm \text{18.27}}$
            & \textbf{63.49}& $_{\pm \text{8.19}}$
            & 
            & 327.90& $_{\pm \text{23.28}}$
            & 95.10& $_{\pm \text{1.17}}$
            & 
            & 737.36& $_{\pm \text{84.12}}$
            & 10.52& $_{\pm \text{5.98}}$
            \\
            MPLight
            & \multicolumn{2}{c}{-}
            & \multicolumn{2}{c}{-} 
            & 
            & \textbf{297.00} & $_{\pm \text{8.32}}$
            & \textbf{96.21} & $_{\pm \text{0.48}}$
            & 
            & \multicolumn{2}{c}{-} 
            & \multicolumn{2}{c}{-} 
            \\
            TLRP
            & 311.91& $_{\pm \text{35.18}}$
            & 41.10& $_{\pm \text{12.51}}$
            & 
            & 699.15& $_{\pm \text{224.30}}$
            & 65.74& $_{\pm \text{17.31}}$
            & 
            & 508.61& $_{\pm \text{88.51}}$
            & 30.30& $_{\pm \text{7.00}}$
            \\
            \midrule
            TinyLight
            & \textbf{253.99}& $_{\pm \text{8.08}}$
            & 62.16& $_{\pm \text{3.77}}$
            & 
            & 310.62& $_{\pm \text{3.68}}$
            & 95.79& $_{\pm \text{0.39}}$
            & 
            & \textbf{489.93}& $_{\pm \text{19.76}}$
            & \textbf{31.29}& $_{\pm \text{2.73}}$
            \\
            \bottomrule
            
        \end{tabular}
        \caption{The performance of all models on six real-world road networks (sec. = second, veh. = vehicle, min. = minute). }
        \label{table:result}
    \end{table*}
    
    \subsection{Resource Consumption}\label{sec:resource}
    
    In addition to the sparse connection of our sub-graph, there are several additional designs to further reduce its consumption on resources. First, although we adopt massive network blocks to implement the super-graph, all the basic operations are restricted to be in the set of \{linear, ReLU, summation\}. This enables us to implement all modules on embedding devices with no dependency on third-party libraries such as \texttt{gemmlowp} / \texttt{ruy} (required in TensorFlow Lite) or \texttt{QNNPACK} (required in PyTorch), which consume additional memory spaces. Second, we deliberately avoid using complex operators (such as convolution) that cause intensive computations in single-threaded programs. Instead, all our network blocks are chosen with moderate sizes of parameter and computation. This decreases our model's dependency on specialized devices such as GPU. We will quantitatively measure TinyLight's resource consumption in Section \ref{subsec:analysis_resource}. 
    
    \section{Experiments}\label{sec:experiment}
    
    This section presents experimental studies. Section \ref{subsec: experimental_setting} introduces related settings. Then, we evaluate TinyLight from both aspects of traffic efficiency in Section \ref{subsec:analysis_efficiency} and resource consumption in Section \ref{subsec:analysis_resource}, which correspond to our two parallel objectives. We also provide an ablation study for the sub-graph extraction of TinyLight in Section \ref{subsec:ablation_study}. The experiments are conducted on CityFlow \cite{cityflow}, an open-source simulator for realistic traffic environments. 
    
    \subsection{Experimental Settings}\label{subsec: experimental_setting}
    
    \paragraph{Datasets.} 
    We evaluate our work on six road networks with real-world traffic flows, which are all publicly available.\footnote{\url{https://traffic-signal-control.github.io/}} Figure \ref{fig:dataset} presents a top view of these road networks, including three single intersections from Hangzhou, one $5 \times 1$ network from Atlanta, one $4 \times 3$ network from Jinan and one $4\times 1$ network from Los Angeles. For datasets of Atlanta and Los Angeles, the intersection structures are heterogeneous. This prevents some baselines that require all intersections to be homogeneous. The traffic flows of Hangzhou and Jinan are obtained by surveillance cameras, and those of Atlanta and Los Angeles are collected from open vehicle trajectory data.\footnote{\url{https://ops.fhwa.dot.gov/trafficanalysistools/ngsim.htm}} To measure the statistical variations of each model, we additionally create nine traffic flows for each road network, which are obtained by shifting the initial appearance time of all vehicles in the real-world records of traffic flow with a random noise $\Delta t \in [-60s, 60s]$. 
    
    \paragraph{Baselines.} We adopt both rule-based ATSC solutions (EcoLight \cite{ecoLight_Chauhan}, FixedTime \cite{fixed_cycle}, MaxPressure \cite{max_pressure} and SOTL \cite{sotl}) and DRL-based ones (CoLight \cite{colight_wei}, FRAP \cite{frap_zheng} and MPLight \cite{mplight_aaai}) as our baselines. For the ablation study, we compare TinyLight with its variant that randomly selects paths to form the sub-graph, named TLRP (TinyLight with Random Path). 
    
    \subsection{Traffic Efficiency}\label{subsec:analysis_efficiency} We measure the traffic efficiency of all models with two indexes: the travel time (per vehicle) and the throughput (per minute). The first index evaluates the average time required for a vehicle to complete its trip. The second index counts on average how many vehicles complete their trips in a minute. 
    Both of these indexes are widely adopted in previous works on traffic signal control \cite{survey_rl_on_atsc}. 
    
    The results of all models on these two indexes are presented in Table \ref{table:result}. It can be observed that TinyLight achieves advanced performances on both indexes. Specifically, our model requires the lowest travel time on five datasets and holds the highest throughput on four of them. Even for datasets where TinyLight has not reached the best score, its performance still remains competitive. For instance, although the average throughput of TinyLight on Atlanta dataset is not the best, it only falls behind FRAP by a margin of $1.33$ vehicle per minute. On Jinan dataset, it is also the closest one to MPLight, the baseline which obtains the highest scores. Please note that comparing with all other DRL-based works, TinyLight is implemented with significantly fewer resource consumption (we will analyze this issue quantitatively in Section \ref{subsec:analysis_resource}). Nevertheless, our model still reaches competitive performance on these road networks with real-world traffic demands. This supports the feasibility of TinyLight: to find a model that is both well-performed and tiny in size, letting it implementable on cheap devices with limited resources. 
    
    \subsection{Resource Consumption}\label{subsec:analysis_resource}
    
    We analyze the resource consumption of each model with two criteria: storage and computation. They together determine the minimum configuration of the required device. We measure the model's storage by its parameter size and the model's computation by its amount of floating-point operations (FLOPs). These values are chosen because (i) they can be compared quantitatively, and (ii) they are independent of irrelevant conditions such as the clock rate or the optimization flags on compilers. These make them also frequently adopted in other resource-critic domains \cite{attention_is_all_you_need}. 
    
    Figure \ref{fig:resource} depicts the parameter sizes and FLOPs of each model on an intersection from Jinan dataset.\footnote{Appendix C provides the detailed computation procedure.} The values are presented in a log-log plot and the normalized scores of travel time for each model are represented with bars on top of each model. Comparing with rule-based models, although TinyLight has slightly larger scales of storage and computation (but still in the capacity of our target MCU), its performance is on average better than them. This is because rule-based works highly rely on human experience and can be too rigid when dealing with dynamic traffic flows. Comparing with other DRL-based models, TinyLight requires significantly fewer resources. Its storage is diminished by our entropy-minimized objective function, which ablates redundant edges proactively and makes its parameter size 19.32 times smaller than CoLight. As FRAP and MPLight are built up with convolutional operators, their parameters only exist in kernels and therefore are also small in scale. However, such operation incurs intensive parallel operations, which severely impact the model's response time ($>$1.5s on our target MCU) if GPU is not available. Instead, the computation of TinyLight is bounded by our super-graph, which intentionally avoids these complex operators and thus is 90.43 times smaller than FRAP and MPLight. To make these values more intuitive, we plot in Figure \ref{fig:resource} the borderlines of 32KB ROM and 0.1s response time on ATmega328P (an MCU with merely 2KB RAM, 32KB ROM and a clock rate of 8MHz), and TinyLight is the only DRL-based work which satisfies both conditions. This manifests the applicability of TinyLight on devices with extremely limited resources. 
    
    \begin{figure}
        \centering
        \includegraphics[width=0.84\linewidth]{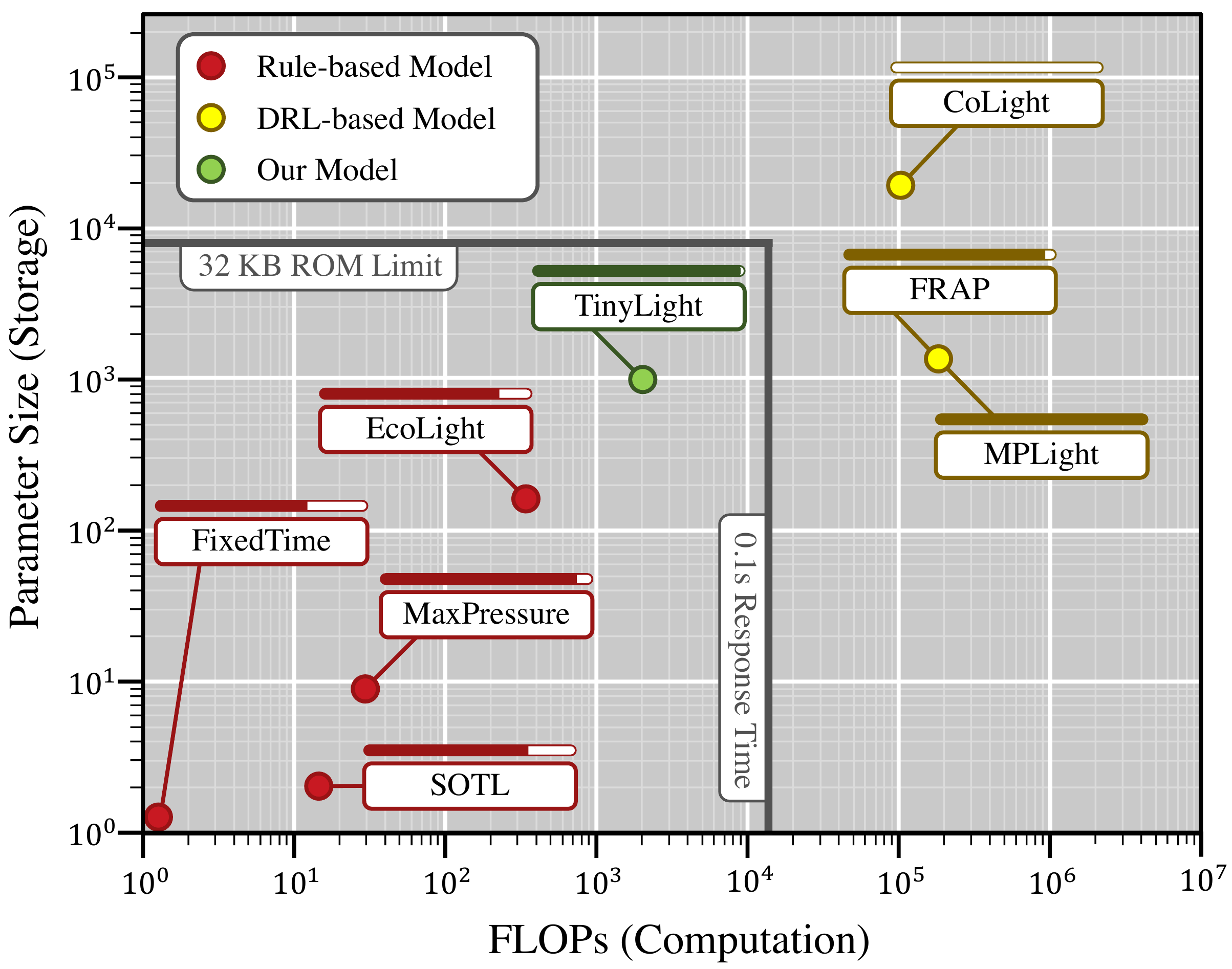}
        \caption{Log-log plot for each model's resource consumption. The bar on top of each model denotes its normalized score of travel time. }
        \label{fig:resource}
    \end{figure}

    \subsection{Ablation Study}\label{subsec:ablation_study}
    
    We attribute the performance of TinyLight to the procedure of sub-graph extraction, which helps us find a qualified policy network from massive candidates automatically. To support this argument, we further compare TinyLight with TLRP, a variant of itself that replaces the found path with randomly selected ones. Since many of our candidate features are obtained from previous works and our super-graph only involves basic operations, TLRP can be regarded as a model that is empirically designed with the constraint of limited resources. It can be concluded from Table \ref{table:result} that the variance of TLRP is much higher than that of TinyLight. For example, the variance of TLRP's travel time on Jinan dataset is 60.95 times higher than TinyLight. This reflects that manually designing a fixed network structure to fit all specific intersections is challenging. In contrast, TinyLight is able to adjust itself more smoothly to different intersections, as the model structure is automatically determined by the real traffic flows. 
    
    \section{Conclusion} 
    
    This paper presents TinyLight, the first DRL-based ATSC model that is designed for devices with extremely limited resources. 
    We first construct a super-graph to cover a wide range of sub-graphs that are compatible with resource-limited devices. Then, we ablate edges in the super-graph automatically with an entropy-minimized objective function. This enables TinyLight to work on a standalone MCU with merely KB of memory and mediocre clock rate, which costs less than \$5. We evaluate TinyLight on multiple road networks with real-world traffic demands and achieve competitive performance with significantly fewer resources. As traffic signal control is a field with broad application background, our work provides a feasible solution to apply ATSC on scenarios with limited budgets. Besides, our methodology can be trivially generalized to a wide range of fields that are cost sensitive. 
    
    We also acknowledge the limitations of our current solution and would like to point out some future directions for improvement. 
    First, although TinyLight can be implemented on embedding devices, its training still requires general-purposed machines. We expect the training won't be frequent as traffic pattern often repeats periodically. However, a better idea is to directly enable on-chip model re-training. Second, currently we are focusing on small-scaled problems as our interested scenarios are those with limited budgets. However, larger problems are also common in the world. Dealing with them is more challenging as the searching space is much wider. Therefore, a more effective sub-graph extraction method is expected under this condition \cite{YangZ021}.

    \section*{Acknowledgments}
    
    This work was supported by Natural Science Foundation of China (No. 61925603), The Key Research and Development Program of Zhejiang Province in China (2020C03004) and Zhejiang Lab.

    %% The file named.bst is a bibliography style file for BibTeX 0.99c
    \bibliographystyle{named}
    \bibliography{ijcai22}
    
%    \iffalse 
    \onecolumn
    \appendix
    \section{The Covered Features}\label{sec:candidate_feature}

    \subsection{The Formal Term Definitions}
    Before diving into the details, we first define some basic notations to let the definition of our features be more clear. To avoid confusion, we use bold letters to denote sets, and plain letters to denote elements in the set. Some notations are slightly abused to let the definitions more straightforward. 
    
    \begin{definition}[Intersection, road and lane]
        An intersection ${I}$ includes multiple incoming roads $\mathbf{R}_{\operatorname{in}} = \left\{ R_{\operatorname{in}}^{1}, R_{\operatorname{in}}^{2}, \cdots \right\} \subset {I}$ and outgoing roads $\mathbf{R}_{\operatorname{out}} = \left\{ R_{\operatorname{out}}^{1}, R_{\operatorname{out}}^{2}, \cdots \right\} \subset {I}$. The set of all roads that connects to the intersection ${I}$ is defined as $\mathbf{R} = \mathbf{R}_{\operatorname{in}} \cup \mathbf{R}_{\operatorname{out}} \subset {I}$. 
        For each incoming road $R_{\operatorname{in}} \in \mathbf{R}_{\operatorname{in}}$, there may exist multiple parallel incoming lanes $\mathbf{L}_{\operatorname{in}} = \left\{ L_{\operatorname{in}}^{1}, L_{\operatorname{in}}^{2}, \cdots \right\} \subseteq R_{\operatorname{in}}$. Similarly, there may exist multiple outgoing lanes $\mathbf{L}_{\operatorname{out}} = \left\{ L_{\operatorname{out}}^1, L_{\operatorname{out}}^2, \cdots \right\} \subseteq R_{\operatorname{out}}$ for each outgoing road $R_{\operatorname{out}} \in \mathbf{R}_{\operatorname{out}}$. 
    \end{definition}
    
    It is apparent that intersection, road and lane depict the road network from three different scales. Given the above definitions, we can now define (i) the set of lane links for the current intersection where vehicles can drive, and (ii) the phase which is used to operate the intersection. 
    
    \begin{definition}[Lane link] 
        A lane link $L_{\operatorname{link}}$ consists of a pair of incoming lane and outgoing lane, denoted as $L_{\operatorname{link}} = \left\{ L_{\operatorname{in}}, L_{\operatorname{out}} \right\}$. We only consider the set of valid lane links $\mathbf{L}_{\operatorname{link}}$, where vehicles are allowed to drive from the incoming lane $L_{\operatorname{in}}$ to the outgoing lane $L_{\operatorname{out}}$. 
    \end{definition}
    
    \begin{definition}[Phase]
        A phase $P$ allows vehicles from a set of lane links $\mathbf{L}_{\operatorname{link}}$ to pass through the intersection without introducing any right-of-way conflict. We denote the mapping from phase to passable lane links as 
        $
        P\left( {I} \right) = \mathbf{L}_{\operatorname{link}}
        $.  Given an intersection ${I}$, the set of all feasible phases $\mathbf{P} = \left\{   P^1, P^2, \cdots \right\}$ can be determined beforehand. 
    \end{definition}
    
    \begin{figure}[h]
        \centering
        \includegraphics[width=0.8\linewidth]{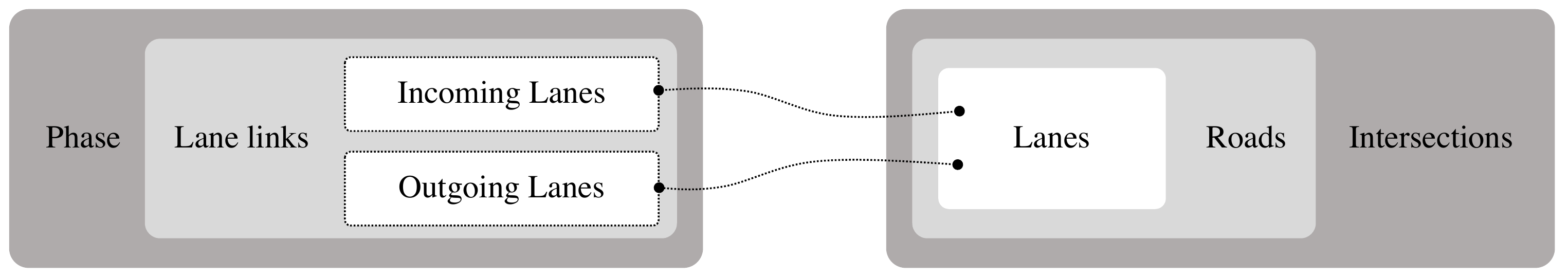}
        \caption{A diagram for the term relationship in our multi-scaled feature extractor. }
        \label{fig:venn}
    \end{figure}
    
    Figure \ref{fig:venn} depicts the relationship of aforementioned terms. We then define some basic functions to fetch the required information for our road network, which make the following illustrations more straightforward. 
    
    \begin{definition}[Vehicle]
        The function $f_{\operatorname{veh.}} \left( x \right)$  returns the set of vehicles for $x$, where $x$ can be any lane, road or intersection. 
    \end{definition}
    
    \begin{definition}[Speed]
        The function $f_{\operatorname{spd.}} \left(v \right)$ returns the current speed of vehicle $v$. 
    \end{definition}
    
    \begin{definition}[Delay]
        The function $f_{\operatorname{delay}}\left( L \right)$ returns the estimated delay for lane $L$ when comparing with the ideal condition without any traffic congestion. 
    \end{definition}
    
    \begin{definition}[Waiting time]
        The function $f_{\operatorname{wait}} (v)$ returns the waiting time for vehicle $v$ since its last stop. 
    \end{definition}
    
    \begin{definition}[Travel time]
        The function $f_{\operatorname{tt.}} (v)$ returns the travel time of vehicle $v$. 
    \end{definition}
    
    \begin{definition}[Bitmap image]
        The function $f_{\operatorname{img.}} (I)$ returns a bitmap representation for the intersection $I$. 
    \end{definition}
    
    \subsection{The List of Covered Features}
    Here we  provide the complete list of candidate features implemented in our work. Some of them also appear in previous DRL-based works (c.f. \cite{survey_rl_on_atsc} for an up-to-date survey), and the others are their natural extensions under different scales. 
    
    \begin{enumerate}
        \item \texttt{lane\_2\_num\_vehicle}: the mapping from each lane to the number of vehicles on that lane, defined as:
        \begin{equation*}
            F_1 = \left\{ L \rightarrowtail 
            \left| f_{\operatorname{veh.}}\left( L\right) \right|   \mid \forall L \in \mathbf{L} \right\} 
        \end{equation*} 
        \item \texttt{lane\_2\_num\_waiting\_vehicle}: the mapping from each lane to the number of vehicles that are currently waiting on that lane, defined as:
        \begin{equation*}
            F_2 = 
            \left\{ 
            L \rightarrowtail 
            \left| 
            \left\{ 
            v \in  f_{\operatorname{veh.}} \left(L \right) \mid f_{\operatorname{spd.}}\left( v \right) = 0 
            \right\} 
            \right|   \mid 
            \forall L \in \mathbf{L}
            \right\} 
        \end{equation*} 
        \item \texttt{lane\_2\_sum\_waiting\_time}: the mapping from each lane to the sum of waiting time of all vehicles on that lane since their last stop, defined as: 
        \begin{equation*}
            F_3 = \left\{ 
            L \rightarrowtail  
            \sum f_{\operatorname{wait}} \left(v \right) , \forall v \in f_{\operatorname{veh.}} \left( L \right) 
            \mid 
            \forall L \in \mathbf{L}
            \right\} 
        \end{equation*}
        \item \texttt{lane\_2\_delay}: the mapping from each lane to the estimated delay on that lane. Defined as: 
        \begin{equation*}
            F_4 = 
            \left\{
            L \rightarrowtail 
            f_{\operatorname{delay}} \left( L \right) 
            \mid \forall L \in \mathbf{L}
            \right\} 
        \end{equation*} 
        \item \texttt{lane\_2\_num\_vehicle\_seg\_by\_k}: this feature first splits each lane into $K$ segments and then computes the mapping from each segment to the number of vehicles on that segment. Defined as: 
        \begin{equation*}
            F_5 = \left\{ L^{k} \rightarrowtail 
            \left| f_{\operatorname{veh.}}\left( L^k \right) \right|   \mid \forall L \in \mathbf{L}, k \in \left[1, \cdots, K \right] \right\} 
        \end{equation*} 
        \item \texttt{inlane\_2\_num\_vehicle}: similar to \texttt{lane\_2\_num\_vehicle}, but only consider incoming lanes. Defined as: 
        \begin{equation*}
            F_6 = \left\{ 
            L_{\operatorname{in}} \rightarrowtail \left| 
            f_{\operatorname{veh.}} \left( L_{\operatorname{in}} \right)
            \right| 
            \mid 
            \forall L_{\operatorname{in}} \in \mathbf{L}_{\operatorname{in}}
            \right\} 
        \end{equation*}
        \item \texttt{inlane\_2\_num\_waiting\_vehicle}: similar to \texttt{lane\_2\_num\_waiting\_vehicle}, but only consider incoming lanes. Defined as: 
        \begin{equation*}
            F_7 = 
            \left\{ 
            L_{\operatorname{in}} \rightarrowtail 
            \left| 
            \left\{ 
            v \in  f_{\operatorname{veh.}} \left(L_{\operatorname{in}} \right) \mid f_{\operatorname{spd.}}\left( v \right) = 0 
            \right\} 
            \right|   \mid 
            \forall L_{\operatorname{in}} \in \mathbf{L}_{\operatorname{in}}
            \right\} 
        \end{equation*} 
        
        \item \texttt{inlane\_2\_sum\_waiting\_time}: similar to \texttt{lane\_2\_sum\_waiting\_time}, but only consider incoming lanes. Defined as: 
        \begin{equation*}
            F_8 = \left\{ 
            L_{\operatorname{in}} \rightarrowtail  
            \sum f_{\operatorname{wait}} \left(v \right) , \forall v \in f_{\operatorname{veh.}} \left( L_{\operatorname{in}} \right) 
            \mid 
            \forall L_{\operatorname{in}} \in \mathbf{L}_{\operatorname{in}}
            \right\} 
        \end{equation*}
        \item \texttt{inlane\_2\_delay}: similar to \texttt{lane\_2\_delay}, but only consider incoming lanes. Defined as: 
        \begin{equation*}
            F_9 = 
            \left\{
            L_{\operatorname{in}} \rightarrowtail 
            f_{\operatorname{delay}} \left( L_{\operatorname{in}} \right) 
            \mid \forall L_{\operatorname{in}} \in \mathbf{L}_{\operatorname{in}}
            \right\} 
        \end{equation*} 
        \item \texttt{inlane\_2\_num\_vehicle\_seg\_by\_k}: similar to \texttt{lane\_2\_num\_vehicle\_seg\_by\_k}, but only consider incoming lanes. Defined as: 
        \begin{equation*}
            F_{10} = \left\{ L_{\operatorname{in}}^{k} \rightarrowtail 
            \left| f_{\operatorname{veh.}}\left( L_{\operatorname{in}}^k \right) \right|   \mid \forall L_{\operatorname{in}} \in \mathbf{L}_{\operatorname{in}}, k \in \left[1, \cdots, K \right] \right\} 
        \end{equation*} 
        \item \texttt{inlane\_2\_pressure}: the mapping from each incoming lane to its pressure. Defined as: 
        \begin{equation*}
            F_{11} = \left\{ 
            L_{\operatorname{in}} \rightarrowtail 
            \sum \left( f_{\operatorname{veh.}} \left( L_{\operatorname{out}}\right) - 
            f_{\operatorname{veh.}} \left( L _{\operatorname{in}}\right)  \right) \text{ where } 
            \left\{ L_{\operatorname{in}}, L_{\operatorname{out}} \right\} \in \mathbf{L}_{\operatorname{link}} 
            \mid 
            \forall                L_{\operatorname{in}} \in \mathbf{L}_{\operatorname{in}}
            \right\} 
        \end{equation*}
        \item \texttt{outlane\_2\_num\_vehicle}: similar to \texttt{lane\_2\_num\_vehicle}, but only consider outgoing lanes.  Defined as: 
        \begin{equation*}
            F_{12} = \left\{ 
            L_{\operatorname{out}} \rightarrowtail \left| 
            f_{\operatorname{veh.}} \left( L_{\operatorname{in}} \right)
            \right| 
            \mid 
            \forall L_{\operatorname{out}} \in \mathbf{L}_{\operatorname{out}}
            \right\} 
        \end{equation*}
        \item \texttt{outlane\_2\_num\_waiting\_vehicle}: similar to \texttt{lane\_2\_num\_waiting\_vehicle}, but only consider outgoing lanes. Defined as: 
        \begin{equation*}
            F_{13} = 
            \left\{ 
            L_{\operatorname{out}} \rightarrowtail 
            \left| 
            \left\{ 
            v \in  f_{\operatorname{veh.}} \left(L_{\operatorname{out}} \right) \mid f_{\operatorname{spd.}}\left( v \right) = 0 
            \right\} 
            \right|   \mid 
            \forall L_{\operatorname{out}} \in \mathbf{L}_{\operatorname{out}}
            \right\} 
        \end{equation*} 
        \item \texttt{outlane\_2\_sum\_waiting\_vehicle}: similar to \texttt{lane\_2\_sum\_waiting\_vehicle}, but only consider outgoing lanes. Defined as: 
        \begin{equation*}
            F_{14} = \left\{ 
            L_{\operatorname{out}} \rightarrowtail  
            \sum f_{\operatorname{wait}} \left(v \right) , \forall v \in f_{\operatorname{veh.}} \left( L_{\operatorname{out}} \right) 
            \mid 
            \forall L_{\operatorname{out}} \in \mathbf{L}_{\operatorname{out}}
            \right\} 
        \end{equation*}
        \item \texttt{outlane\_2\_delay}: similar to \texttt{lane\_2\_delay}, but only consider outgoing lanes. Defined as: 
        \begin{equation*}
            F_{15} = 
            \left\{
            L_{\operatorname{out}} \rightarrowtail 
            f_{\operatorname{delay}} \left( L_{\operatorname{out}} \right) 
            \mid \forall L_{\operatorname{out}} \in \mathbf{L}_{\operatorname{out}}
            \right\} 
        \end{equation*} 
        \item \texttt{outlane\_2\_num\_vehicle\_seg\_by\_k}:  similar to \texttt{lane\_2\_num\_vehicle\_seg\_by\_k}, but only consider outgoing lanes. Defined as: 
        \begin{equation*}
            F_{16} = \left\{ L_{\operatorname{out}}^{k} \rightarrowtail 
            \left| f_{\operatorname{veh.}}\left( L_{\operatorname{out}}^k \right) \right|   \mid \forall L_{\operatorname{out}} \in \mathbf{L}_{\operatorname{out}}, k \in \left[1, \cdots, K \right] \right\} 
        \end{equation*} 
        \item \texttt{inroad\_2\_num\_vehicle}: the mapping from each incoming road to the number of vehicles on that road. Defined as: 
        \begin{equation*}
            F_{17} = \left\{
            R_{\operatorname{in}} \rightarrowtail 
            \left| 
            f_{\operatorname{veh.}} \left( R_{\operatorname{in}} \right)
            \right| 
            \mid 
            \forall R_{\operatorname{in}} \in \mathbf{R}_{\operatorname{in}}
            \right\} 
        \end{equation*}
        \item \texttt{inroad\_2\_num\_waiting\_vehicle}: the mapping from each incoming road to the number of vehicles that are waiting on that road. Defined as: 
        \begin{equation*}
            F_{18} = 
            \left\{ 
            R_{\operatorname{in}} \rightarrowtail 
            \left| 
            \left\{ 
            v \in  f_{\operatorname{veh.}} \left(R_{\operatorname{in}} \right) \mid f_{\operatorname{spd.}}\left( v \right) = 0 
            \right\} 
            \right|   \mid 
            \forall R_{\operatorname{in}} \in \mathbf{R}_{\operatorname{in}}
            \right\} 
        \end{equation*} 
        \item \texttt{inroad\_2\_sum\_waiting\_time}: the mapping from each incoming road to the sum of waiting time of all vehicles on that road since their last stop. Defined as: 
        \begin{equation*}
            F_{19} = \left\{ 
            R_{\operatorname{in}} \rightarrowtail  
            \sum f_{\operatorname{wait}} \left(v \right) , \forall v \in f_{\operatorname{veh.}} \left( R_{\operatorname{in}} \right) 
            \mid 
            \forall R_{\operatorname{in}} \in \mathbf{R}_{\operatorname{in}}
            \right\} 
        \end{equation*}
        \item \texttt{inroad\_2\_delay}: the mapping from each incoming road to the average delay of all lanes on that road. Defined as: 
        \begin{equation*}
            F_{20} = 
            \left\{
            R_{\operatorname{in}} \rightarrowtail 
            f_{\operatorname{delay}} \left( R_{\operatorname{in}} \right) 
            \mid \forall R_{\operatorname{in}} \in \mathbf{R}_{\operatorname{in}}
            \right\} 
        \end{equation*} 
        \item \texttt{phase\_2\_num\_vehicle}: the mapping from each phase to the number of vehicles that have the right of way under  that phase.  Defined as: 
        \begin{equation*}
            F_{21} = \left\{ 
            P \rightarrowtail 
            \sum  f_{\operatorname{veh.}} (L_{\operatorname{link}}),  
            \forall   L_{\operatorname{link}} \in P\left({I}\right)
            \mid 
            \forall   P \in \mathbf{P}  
            \right\} 
        \end{equation*}
        \item \texttt{phase\_2\_num\_waiting\_vehicle}: the mapping from each phase to the number of vehicles that will terminate their waiting state under that phase. Defined as:
        \begin{equation*}
            F_{22} = 
            \left\{ 
            P \rightarrowtail 
            \left| 
            \left\{ 
            v \in  P \left({I} \right) \mid f_{\operatorname{spd.}}\left( v \right) = 0 
            \right\} 
            \right|   \mid 
            \forall P \in \mathbf{P}
            \right\} 
        \end{equation*} 
        \item \texttt{phase\_2\_sum\_waiting\_time}: the mapping from each phase to the sum of waiting time for all vehicles that are about to obtain the right of way under that phase. Defined as: 
        \begin{equation*}
            F_{23} = \left\{ 
            P \rightarrowtail  
            \sum f_{\operatorname{wait}} \left(v \right) , \forall v \in P\left( {I} \right) 
            \mid 
            \forall P \in \mathbf{P}
            \right\} 
        \end{equation*}
        \item \texttt{phase\_2\_delay}: the mapping from each phase to the average delay of all lanes that have the right of way under that phase. Defined as: 
        \begin{equation*}
            F_{24} = 
            \left\{
            P \rightarrowtail 
            f_{\operatorname{delay}} \left( P\left( {I }\right) \right) 
            \mid \forall P \in \mathbf{P}
            \right\} 
        \end{equation*} 
        \item \texttt{phase\_2\_pressure}: the mapping from each phase to the average pressure over all lane links that have the right of way under that phase. Defined as: 
        \begin{equation*}
            F_{25} = \left\{ 
            P \rightarrowtail 
            \sum \left( f_{\operatorname{veh.}} \left( L_{\operatorname{out}}\right) - 
            f_{\operatorname{veh.}} \left( L _{\operatorname{in}}\right)  \right) \text{ where } 
            \left\{ L_{\operatorname{in}}, L_{\operatorname{out}} \right\} \in P({I} )
            \mid 
            \forall            P \in \mathbf{P}
            \right\} 
        \end{equation*}
        \item \texttt{inter\_2\_num\_vehicle}: the total number of vehicles on that intersection. Defined as: 
        \begin{equation*}
            F_{26} = \left\{ I \rightarrowtail 
            \left| f_{\operatorname{veh.}}\left( I\right) \right|   \mid \forall I \in \mathbf{I} \right\} 
        \end{equation*} 
        \item \texttt{inter\_2\_num\_waiting\_vehicle}: the total number of vehicles that are waiting on that intersection. Defined as: 
        \begin{equation*}
            F_{27}= 
            \left\{ 
            I  \rightarrowtail 
            \left| 
            \left\{ 
            v \in  f_{\operatorname{veh.}} \left(I \right) \mid f_{\operatorname{spd.}}\left( v \right) = 0 
            \right\} 
            \right|   \mid 
            \forall I \in \mathbf{I}
            \right\} 
        \end{equation*} 
        \item \texttt{inter\_2\_sum\_waiting\_time}: the sum of waiting time of all vehicles on that intersection since their last halt. Defined as: 
        \begin{equation*}
            F_{28} = \left\{ 
            I \rightarrowtail  
            \sum f_{\operatorname{wait}} \left(v \right) , \forall v \in f_{\operatorname{veh.}} \left( I \right) 
            \mid 
            \forall I \in \mathbf{I}
            \right\} 
        \end{equation*}
        \item \texttt{inter\_2\_delay}: the average delay of all lanes on that intersection. Defined as: 
        \begin{equation*}
            F_{29} = 
            \left\{
            I \rightarrowtail 
            f_{\operatorname{delay}} \left( I \right) 
            \mid \forall I \in \mathbf{I}
            \right\} 
        \end{equation*} 
        \item \texttt{inter\_2\_pressure}: the average pressure of all lane links on that intersection. 
        Defined as: 
        \begin{equation*}
            F_{30} = \left\{ 
            I \rightarrowtail 
            \sum \left( f_{\operatorname{veh.}} \left( L_{\operatorname{out}}\right) - 
            f_{\operatorname{veh.}} \left( L _{\operatorname{in}}\right)  \right) \text{ where } 
            \left\{ L_{\operatorname{in}}, L_{\operatorname{out}} \right\} \in I 
            \mid 
            \forall            I \in \mathbf{I}
            \right\} 
        \end{equation*}
        \item \texttt{inter\_2\_vehicle\_position\_image}: an image for the position of all vehicles on that intersection. Defined as: 
        \begin{equation*}
            F_{31} = \left\{
            I \rightarrowtail f_{\operatorname{img}} \left( I \right)
            \mid 
            \forall I \in \mathbf{I}
            \right\} 
        \end{equation*}
        \item \texttt{inter\_2\_current\_phase}: the one-hot representation of current phase on that intersection.\footnote{$\mathbf{1}\left(\cdot\right)$ denotes the indicator function, which returns 1 if the condition is true, otherwise it returns 0. } Defined as: 
        \begin{equation*}
            F_{32} = \left\{ 
            I \rightarrowtail 
            \left[
            \mathbf{1}\left(P_{i=1}\right), 
            \mathbf{1}\left(P_{i=2}\right), 
            \cdots , 
            \mathbf{1}\left(P_{i=\left| P\right| }\right)
            \right]
            \mid 
            \forall                I \in \mathbf{I}
            \right\} 
        \end{equation*}
        \item \texttt{inter\_2\_phase\_has\_changed}: an indicator  to show that whether the phase has been changed at last time.\footnote{We use the superscript $t$ to denote the instance at time step $t$.} Defined as: 
        \begin{equation*}
            F_{33} = \left\{ 
            I \rightarrowtail 
            \mathbf{1} \left( P^{t-1}({I}) = P^t(I) \right)
            \mid 
            \forall I \in \mathbf{I} 
            \right\} 
        \end{equation*}
        \item \texttt{inter\_2\_num\_passed\_vehicle\_since\_last\_action}: the total number of vehicles that pass the intersection from last action to now. Defined as: 
        \begin{equation*}
            F_{34} = \left\{ 
            I \rightarrowtail  
            \left| f_{\operatorname{veh.}}^{t} \left( I\right) \right| 
            - 
            \left| f_{\operatorname{veh.}}^{t-1} \left( I\right) \right| 
            \mid 
            \forall                I \in \mathbf{I}
            \right\} 
        \end{equation*}
        \item \texttt{inter\_2\_sum\_travel\_time\_since\_last\_action}: the total sum of travel time for vehicles that pass the intersection from last time to now. Defined as: 
        \begin{equation*}
            F_{35}= \left\{ 
            I \rightarrowtail 
            \sum f_{\operatorname{tt}} \left( v \right) \text{ where } v \in I^{t - 1} \text{ and } v \notin I^{t} 
            \mid 
            \forall            I \in \mathbf{I}
            \right\} 
        \end{equation*}
        \item \texttt{lanelink\_2\_pressure}: the mapping from each lane link to its pressure. Defined as: 
        \begin{equation*}
            F_{36} = \left\{ 
            L_{\operatorname{link}} = \left\{ L_{\operatorname{in}}, L_{\operatorname{out}} \right\} \rightarrowtail 
            \left|   f_{\operatorname{veh.}} \left( L_{\operatorname{out}}\right)  \right| - 
            \left|  f_{\operatorname{veh.}} \left( L _{\operatorname{in}}\right)  \right| 
            \mid 
            \forall            L_{\operatorname{link}} \in I 
            \right\} 
        \end{equation*}
        \item \texttt{lanelink\_2\_num\_vehicle}: the mapping from each lane link to the number of vehicles on that lane link. Defined as: 
        \begin{equation*}
            F_{37} = \left\{ 
            L_{\operatorname{link}} = \left\{ L_{\operatorname{in}}, L_{\operatorname{out}} \right\} \rightarrowtail 
            \left| 
            f_{\operatorname{veh.}} \left( L_{\operatorname{link}} \right)
            \right| 
            \mid 
            \forall            L_{\operatorname{link}} \in I 
            \right\} 
        \end{equation*}
    \end{enumerate}
    
    \newpage 
    \section{Model Specification}\label{sec:model_specification}
    
    \subsection{The Pseudo Code} 
    
    The pseudo code for our implementation is presented in Algorithm \ref{alg:algorithm}. 
    
    \begin{algorithm}
        \caption{TinyLight}
        \label{alg:algorithm}
        \textbf{Input}: $\mathcal{G}_{\operatorname{super}}$  (the super-graph)\\
        \textbf{Output}: $\mathcal{G}_{\operatorname{sub}}$  (the sub-graph)
        \begin{algorithmic}[1] %[1] enables line numbers
            \STATE /* Search the sub-graph */
            \WHILE{not converged at episode $j$}
            \STATE Reset the environment \texttt{env}
            \WHILE{ episode $j$ has not terminated at time $t$} 
            \STATE $a_t = \mathcal{G}_{\operatorname{super}} \left( \mathbf{s}_t \right)$  \cmt{Sample a phase $a_t$ from $\mathcal{G}_{\operatorname{super}}$} 
            \STATE $\mathbf{s}_{t'}, r_t = \mathtt{env.step} \left( a_t \right)$  \cmt{Interact with the \texttt{env}}
            \STATE Update the replay buffer 
            \IF{samples are sufficient} 
            \STATE Sample a mini-batch $\mathcal{S}$ from the replay buffer
            \STATE Update $\theta$ by:   \cmt{Objective 1: improve the performance of $\mathcal{G}_{\operatorname{sub}}$}
            \begin{equation}\label{eq:theta}
                \theta \leftarrow  \arg\min_{\theta}   \mathcal{L}_{\theta, \alpha}^{(1)} = 
                \mathbb{E}_{\mathcal{S}} \left[
                \left(
                r_t + \gamma \max_{a_{t+1}} Q_{\theta,\alpha} \left(
                \mathbf{s}_{t+1}, a_{t+1}
                \right)
                - Q_{\theta,\alpha} \left(
                \mathbf{s}_t, a_t
                \right)
                \right) ^ 2
                \right]
            \end{equation}
            \STATE Update $\alpha$ by: \cmt{Objective 2: diminish the size of $\mathcal{G}_{\operatorname{sub}}$}
            \begin{equation}
                \alpha 
                \leftarrow 
                \arg\min_{\alpha} \mathcal{L}_{\theta,\alpha}^{(1)}  +  \beta \mathcal{L}_{\alpha}^{(2)} 
                =  
                \mathbb{E}_{\mathcal{S}} \left[
                \left(
                r_t + \gamma \max_{a_{t'}} Q_{\theta,\alpha} \left(
                \mathbf{s}_{t'}, a_{t'}
                \right)
                - Q_{\theta,\alpha} \left(
                \mathbf{s}_t, a_t 
                \right)
                \right) ^ 2
                \right]
                +  \beta \sum_{I}  \sum_{i} -
                \alpha_{i | I }
                \cdot 
                \log 
                \left(
                \alpha_{i | I }
                \right)
            \end{equation}
            \STATE Update the target network 
            \ENDIF 
            \ENDWHILE 
            \STATE Update the exploration rate $\epsilon$
            \ENDWHILE 
            
            \STATE 
            \STATE /* Refine the sug-graph */
            \STATE Extract the sub-graph $\mathcal{G}_{\operatorname{sub}}$ from $\mathcal{G}_{\operatorname{super}}$ according to the value of $\alpha$ on each layer
            \WHILE{not converged at episode $j$} 
            \STATE Reset the environment \texttt{env}
            \WHILE{episode $j$ has not terminated at time $t$} 
            \STATE $a_t = \mathcal{G}_{\operatorname{sub}} \left( \mathbf{s}_t \right)$  \cmt{Sample a phase $a_t$ from $\mathcal{G}_{\operatorname{sub}}$} 
            \STATE $\mathbf{s}_{t'}, r_t = \mathtt{env.step} \left( a_t \right)$  \cmt{Interact with the \texttt{env}}
            \STATE Update replay buffer 
            \STATE Sample a mini-batch $\mathcal{S}$ and update $\theta$ by Eq. \ref{eq:theta}  \cmt{Refine the sub-graph $\mathcal{G}_{\operatorname{sub}}$}
            \ENDWHILE 
            \ENDWHILE 
            
            \STATE \textbf{return} solution
        \end{algorithmic}
    \end{algorithm}
    
    \subsection{Proof for the Existence of Sub-Graph}
    
    \begin{theorem}
        Given the structure of our super-graph and the constraint of $\alpha$, there is at least one path that traverses from input to output layer.  (Section \ref{sec:the sub-graph})
    \end{theorem}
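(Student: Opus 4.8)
The plan is to exploit the single structural fact that on each non-output layer the weights $\alpha_{\cdot|I}$ sum to one, and to build a path greedily, layer by layer. First I would fix notation: for each layer $I \in \{1,2,3\}$ let $\alpha_{\cdot|I} = (\alpha_{1|I}, \dots, \alpha_{L_I|I})$ denote the (post-softmax) weight vector, recalling that its entries are non-negative and satisfy $\sum_{i} \alpha_{i|I} = 1$. The first observation is then immediate: a tuple of $L_I$ non-negative reals summing to $1$ cannot be identically zero, so for each $I \in \{1,2,3\}$ there exists an index $i^{\star}(I)$ with $\alpha_{i^{\star}(I)\,|\,I} > 0$ (indeed $\geq 1/L_I$ by pigeonhole).

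The second step connects these surviving components into a path. Here I would use the fact that the weight $\alpha_{i|I}$ is \emph{shared} by every transformation leaving the $i$-th component of layer $I$; hence $\alpha_{i|I} > 0$ implies that none of the edges emanating from that component are ablated, i.e. component $i$ of layer $I$ remains connected to every component of layer $I+1$. Chaining this: component $i^{\star}(1)$ of layer $1$ is joined to component $i^{\star}(2)$ of layer $2$ (since the former carries positive weight), which is joined to component $i^{\star}(3)$ of layer $3$, which in turn is joined to the unique output component of layer $4$. Reading these edges in order exhibits a path from an input-layer component to the output, which is the claim.

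The argument is essentially pigeonhole plus a short induction, so there is no hard computational core; the only point needing care is the bookkeeping about which objects the weights attach to. Specifically, I would be explicit that $\alpha_{i|I}$ indexes an \emph{outgoing component} rather than an individual edge, so that a single positive coordinate certifies a whole fan of surviving edges into the next layer, and that layer $4$ contributes its lone $Q$-value component as the path's endpoint. I would also note that this reasoning survives the final edge-pruning step, since pruning only discards coordinates that are (near-)zero and therefore never removes the positive coordinate $i^{\star}(I)$ guaranteed above; hence the extracted sub-graph still contains this path, which is exactly what guarantees the existence of the final sub-graph.
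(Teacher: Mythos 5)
Your proof is correct and follows essentially the same route as the paper's: the paper also argues layer by layer (phrased as an induction) that the per-layer constraint $\sum_i \alpha_{i|I}=1$ forces at least one retained $\alpha_{i|I}$, whose shared weight keeps the whole fan of edges from component $i$ to every component of the next layer, so the path extends all the way to the output. Your explicit pigeonhole step and the remark that $\alpha_{i|I}$ indexes a component rather than a single edge just make the same argument slightly more detailed.
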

    \begin{proof}
        This conclusion can be proved by induction. 
        Let's denote $\mathcal{G}_{\operatorname{sub}}^{N}$ as a sub-graph with $N$ layers. If $N=2$ and $\alpha_{i | I =1}$ is remained, it is apparent that the $i$-th component in layer 1 leads to all components in layer 2. If $N=M+1$ and $\alpha_{m|M}$ is remained, the $m$-th component in layer $M$ leads to all components in layer $N$. As we have a path that begins from input layer and ends at all components in layer $M$, it can be certainly extended to layer $N$. 
    \end{proof}
    
    \subsection{Proof for the Sparsity of Sub-Graph}
    
    \begin{theorem}
        $\mathcal{L}_{\alpha}^{(2)}$ is minimal when there is only one instance of $\alpha_{\cdot \mid I}$ that is remained on each layer $I$.  (Section \ref{sec:two parallel objectives})
    \end{theorem}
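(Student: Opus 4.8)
The plan is to exploit the fact that the objective decomposes over layers and that, on each layer, the weights $\alpha_{\cdot\mid I}$ form a probability vector (non-negativity together with the softmax-induced constraint $\sum_i \alpha_{i\mid I}=1$, as set up in Section~\ref{sec:the sub-graph}). Since $\mathcal{L}_{\alpha}^{(2)}=\sum_I \mathcal{H}(\alpha_{\cdot\mid I})$ is a sum of per-layer terms that depend on disjoint blocks of parameters, it is minimized exactly when every summand $\mathcal{H}(\alpha_{\cdot\mid I})$ is individually minimized; so it suffices to analyze a single layer. Fix a layer and write $p=(p_1,\dots,p_{L})$ for $\alpha_{\cdot\mid I}$, with $p_i\ge 0$ and $\sum_i p_i=1$, adopting the usual convention $0\log 0=0$ so that $\mathcal{H}$ extends continuously to the closed simplex.

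First I would establish the lower bound $\mathcal{H}(p)\ge 0$: for each $i$ we have $p_i\in[0,1]$, hence $\log p_i\le 0$ and therefore $-p_i\log p_i\ge 0$; summing gives $\mathcal{H}(p)=\sum_i(-p_i\log p_i)\ge 0$. Next I would characterize the equality case. Because $\mathcal{H}(p)$ is a sum of non-negative terms, $\mathcal{H}(p)=0$ forces $-p_i\log p_i=0$ for every $i$, i.e. for each $i$ either $p_i=0$ or $p_i=1$. Combined with $\sum_i p_i=1$, this means precisely one coordinate equals $1$ and all others equal $0$, i.e. $p$ is a one-hot vector, which is exactly the statement that a single instance of $\alpha_{\cdot\mid I}$ is ``remained''. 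Applying this to each layer and summing, $\mathcal{L}_{\alpha}^{(2)}\ge 0$ with equality iff every layer retains exactly one surviving edge weight.

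For completeness I would record the mild caveat that the softmax map never outputs an exact vertex of the simplex, so strictly the value $0$ is an infimum approached as one logit dominates the rest; phrasing the argument on the closed simplex (the natural relaxation implicitly used while edges are being ablated) makes the minimum attained and the ``only one instance remained'' description exact. One could alternatively reach the same conclusion via strict concavity of $\mathcal{H}$ on the simplex (its unique maximizer is the uniform distribution and its minimizers are the vertices) or via a Lagrange-multiplier stationarity analysis, but the term-by-term non-negativity argument above is the shortest and requires no differentiability. The only points needing any care are the boundary convention $0\log 0=0$ and the infimum-versus-minimum subtlety just mentioned; the rest is immediate.
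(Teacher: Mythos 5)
Your argument is essentially the same as the paper's: both rely on the per-term non-negativity $-\alpha_{i\mid I}\log\alpha_{i\mid I}\ge 0$ under the non-negativity and sum-to-one constraints, conclude each weight must be $0$ or $1$ at the minimum, and use the normalization to force a single surviving weight per layer. Your extra remarks on the $0\log 0$ convention and the softmax only attaining the vertex in the limit are careful additions the paper omits, but they do not change the route.
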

    \begin{proof}
        As we restrict all $\alpha$ to be non-negative and their sum equals to $1$ on each layer, each term of  $\mathcal{L}_{\alpha}^{(2)}$ is greater or equal to zero, that is: 
        \begin{equation*}
            -\alpha_{i \mid I} \cdot \log \left( \alpha_{i \mid I}\right) \geq 0, ~ \forall i, \forall I
        \end{equation*}
        Therefore, $\mathcal{L}_{\alpha}^{(2)}$ is minimal when all of its terms are zero, which means each $\alpha$ has to be either zero or one. Under the constraint of $\alpha$, this condition is satisfied only when there is one instance of $\alpha$ that is remained on each layer. 
    \end{proof}
    
    \subsection{The Hyperparameters}
    
    Table \ref{table:params} lists the set of hyperparameters in our experiments. Most of these hyperparameters are common in previous DRL-based ATSC works \cite{survey_rl_on_atsc}. For TinyLight, the values for $D_{L_2}$  and $D_{L_3}$ are empirically determined with the principle that they are neither too small (to ensure the representation power of the policy network) nor too large (to decrease the resource consumption). 
    
    \begin{table*}[h]
        \centering
        \begin{tabular}{
                p{.1\linewidth}
                >{\raggedleft\arraybackslash}p{.2\linewidth} 
                p{.62\linewidth}
            }
            \toprule
            \textbf{Symbol} & \textbf{Value} & \textbf{Description} \\ 
            \midrule
            \multicolumn{3}{l}{\circled{1} Parameters for all models.} \\ 
            \quad -- $B$  & $ 100,000 $  & The maximal size of replay buffer.  \\ 
            \quad -- $M$ & $ 32 $ & The minibatch size for training.  \\ 
            \quad -- $\gamma $ & $ 0.9 $ & The discount factor for future rewards.  \\  
            \quad -- $\epsilon $ & 0.0\textasciitilde0.1 & The exploration rate, decreasing with respect to the number of episodes.  \\ 
            \quad -- $\tau$ & $0.1$ & The update ratio for the target Q-network.   \\ 
            \quad -- lr  & 1e-3 & The learning rate.  \\ 
            \multicolumn{3}{l}{\circled{2} Parameters for TinyLight.} \\ 
            \quad -- $\beta$ & $16.0$ & The weight of penalty term for the entropy of $\alpha$. \\ 
            \quad -- $D_{L_2}$ & $[16, ~18, ~20, ~22, ~24]$ & The list of output dimensions for layer 2 components in the super-graph. \\ 
            \quad -- $D_{L_3}$ & $[16, ~18, ~20, ~22, ~24]$ & The list of output dimensions for layer 3 components in the super-graph. \\                         
            \bottomrule
        \end{tabular}
        \caption{The list of hyperparameters in our experiments.  }
        \label{table:params}
    \end{table*}
    
    \subsection{The Training Environment}
    
    The experiments are conducted on an Ubuntu 16.04 workstation with 32 CPU cores (Intel Xeon E5-2620), 160G memory and 4 GPU cards (Titan-Xp). 
    All our software dependencies are listed in the \texttt{README} file of our source code, attached in another supplementary material. The evaluation on hardware is performed on an Arduino UNO Rev3\footnote{\url{https://store.arduino.cc/products/arduino-uno-rev3/}} board, with ATmega328P\footnote{\url{https://www.microchip.com/en-us/product/ATmega328P}} as the default MCU (Figure \ref{fig:mcu}). All the experimental results are performed on 10 parallel runs to obtain the distributional information. The training time for each intersection takes about 0.75 hour. The inference time of TinyLight (with post-training quantization) on ATmega328P is about 18.78ms per decision. 
    
    \begin{figure}[h]
        \centering
        \includegraphics[width=0.5\linewidth]{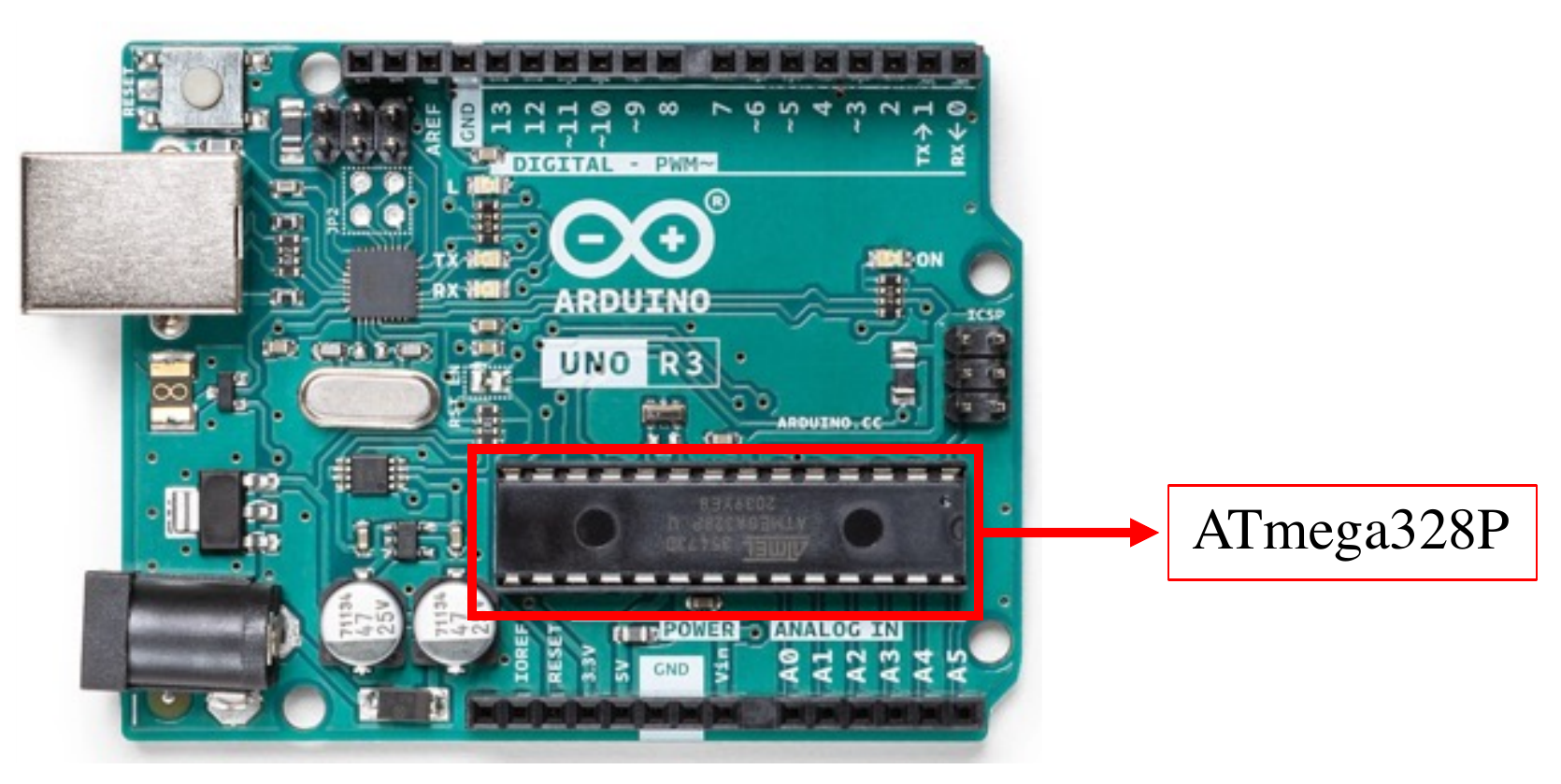}
        \caption{The Arduino UNO Rev3 and the ATmega328P (source image from Arduino's official website).  }
        \label{fig:mcu}
    \end{figure}
    
    \newpage 
    \section{Resource Consumption of Each Model}\label{appendix:resource}
    
    This section provides a detailed list of parameter size and FLOPs (floating point operations) for each model. Section \ref{sec: basic structure} enumerates the relevant atomic structures and operations. From Section \ref{sec:colight} to \ref{sec: tinylight}, we elaborate on the detailed implementation of CoLight \cite{colight_wei}, EcoLight \cite{ecoLight_Chauhan}, FixedTime \cite{fixed_cycle}, FRAP \cite{frap_zheng}, MaxPressure \cite{max_pressure}, MPLight \cite{mplight_aaai}, SOTL \cite{sotl} and TinyLight (this work), respectively. To make our comparison more intuitive, we choose one intersection from Jinan dataset to instantiate the parameter size and FLOPs of each model. Table \ref{table: meta info hangzhou intersection} shows the meta information of this intersection. 
    
    \begin{table*}[h]
        \centering
        \begin{tabular}{
                p{.08\linewidth}
                >{\raggedleft\arraybackslash}p{.08\linewidth} 
                p{.75\linewidth}
            }
            \toprule
            \textbf{Symbol} & \textbf{Value} & \textbf{Description} \\ 
            \midrule
            $L_{\operatorname{in}} $ & $ 12 $ & The number of incoming lanes.  \\  
            $L_{\operatorname{out}} $ & $ 12 $ & The number of outgoing lanes.  \\ 
            $P$  & $ 9 $  & The number of feasible phases.  \\ 
            $L_{\operatorname{link}}$ & $ 36 $ & The number of lane-links. Each lane-link includes an incoming lane and an outgoing lane.  \\ 
            $N$ & $5$ & The total number of relevant intersections, including the target intersection and all its adjacent intersections.  \\ 
            \bottomrule
        \end{tabular}
        \caption{The meta information of our chosen intersection from Jinan dataset. }
        \label{table: meta info hangzhou intersection}
    \end{table*}

    \subsection{Atomic structures and operations}\label{sec: basic structure}
    
    Table \ref{table: atomic structure and ops} lists the parameter size and FLOPs of relevant atomic structures and operations. The signature of each structure (operation) follows the PyTorch API, with some modifications to make the representation of parameter size and FLOPs more clear. The structures (Linear and Conv2d) have names that begin with a capital letter. The operations (relu, matmul and softmax) have names that are fully lowercase, and they have no parameters. For each structure (operation), its FLOPs are counted by the number of floating point arithmetic, such as addition, subtraction, multiplication, division, comparison and exponentiation. We focus on the on-device performance of each model at runtime. Therefore, it is assumed that the input of each model is a single instance, \textit{i.e.}, a mini-batch with a capacity of 1. 
    
    \begin{table*}[!h]
        \centering 
        \begin{tabular}{
                p{.38\linewidth}    
                p{.22\linewidth}
                p{.3\linewidth}
            }
            \toprule 
            \textbf{Structure/Operation} & \textbf{Parameter Size} & \textbf{FLOPs} \\ 
            \midrule 
            % op 1            
            {\circled{1} Linear(in\_features=$F_{\operatorname{in}}$, } 
            & {$(F_{\operatorname{in}} + 1) \times F_{\operatorname{out}} $} 
            & {$(2 \times F_{\operatorname{in}} + 1) \times F_{\operatorname{out}}$} \\ 
            
            {\phantom{\circled{1} Linear(}out\_features=$F_{\operatorname{out}}$)} & & \\ 
            
            % op 2       
            {\circled{2} Conv2d(in\_channels=$C_{\operatorname{in}}$, }  \rule{0mm}{5mm}
            &  {$(C_{\operatorname{in}} + 1) \times C_{\operatorname{out}}$}
            &  {$(2 \times C_{\operatorname{in}} + 1) \times C_{\operatorname{out}} \times H_{\operatorname{in}} \times W_{\operatorname{in}}$} \\ 
            
            {\phantom{\circled{2} Conv2d(}out\_channels=$C_{\operatorname{out}}$, } 
            &
            & \\ 
            
            {\phantom{\circled{2} Conv2d(}in\_width=$W_{\operatorname{in}}$,} 
            &
            & \\ 
            
            {\phantom{\circled{2} Conv2d(}in\_height=$H_{\operatorname{in}}$,} 
            &
            & \\             
            
            {\phantom{\circled{2} Conv2d(}kernel\_size=$1 \times 1$) } 
            &
            & \\

            % op 3
            {\circled{3} relu(features=$F$)} \rule{0mm}{5mm} 
            & $-$ 
            & $2 \times F$ \\ 
            
            % op 4 
            {\circled{4} matmul(input=$\operatorname{Mat}_{X\times Y}$, }  \rule{0mm}{5mm}                      
            & {$-$} 
            & {$2 \times Y \times X \times Z$} \\

            {\phantom{\circled{4} matmul(}other=$\operatorname{Mat}_{Y\times Z}$) }                     
            &
            & \\             
            
            % op 5 
            {\circled{5} softmax(features=$F$)}  \rule{0mm}{5mm}                      
            & $-$ 
            & $3 \times F$ \\ 
            
            \bottomrule
        \end{tabular}
        \caption{The parameter size and FLOPs of relevant atomic structures and operations. }
        \label{table: atomic structure and ops}
    \end{table*}
    
    The following are supplementary notes to Table \ref{table: atomic structure and ops}: 
    \begin{itemize}
        \item \textbf{Linear}: The weight matrix occupies a size of $F_{\operatorname{in}} \times F_{\operatorname{out}}$ and the bias occupies a size of $F_{\operatorname{out}}$. For each output feature, the computation involves 1 multiplication and 1 addition on each dimension of input feature (with the weight matrix) and 1 addition for the bias. 
        \item \textbf{Conv2d}: For the implementation of each comparing model, only Conv2d with kernel size $1 \times 1$ is involved. For all elements in the output, the convolution kernel is shared, but the computation is performed separately. This makes the parameter size of Conv2d similar to that of Linear with $C_{\operatorname{in}}$ input feature and $C_{\operatorname{out}}$ output feature. However, its FLOPs are scaled by a factor of $H_{\operatorname{in}} \times W_{\operatorname{in}}$ because of its separated computation. 
        \item \textbf{relu}: For each feature, the computation involves 1 comparison and 1 assignment. 
        \item \textbf{matmul}: For each element of the output matrix, the computation involves $Y$ multiplications and $Y$ additions. The output matrix has $X \times Z$ elements in total. 
        \item \textbf{softmax}: The computation involves $F$ exponentiations, $F$ additions and $F$ divisions. 
    \end{itemize}
    
    \subsection{CoLight}\label{sec:colight}
    
    CoLight \cite{colight_wei} incorporates the influences of neighboring intersections to facilitate the decision making of target intersection. The authors implement this idea with an index-free graph attention network, whose parameters are shared among all the intersections to reduce the model size. Table \ref{table: colight meta information} lists the meta information of CoLight, which is obtained from the original paper and its released code\footnote{\url{https://github.com/wingsweihua/colight}}. The number of attention head is set to 5 according to the experiments in Section 5.7.3 of \cite{colight_wei}. Under this setup, the parameter size and FLOPs of CoLight are presented in Table \ref{table: colight model size} and \ref{table: colight flops}, respectively. 
    
    \begin{table*}[!h]
        \centering 
        \begin{tabular}{lrl}
            \toprule
            \textbf{Symbol} & \textbf{Value} & \textbf{Description}  \\ 
            \midrule
            $D_{\operatorname{emb.}}$ & $32$ & The dimension of observation embedding. \\ 
            $D_{\operatorname{att.}}$   & $32$ & The dimension of each attention head. \\ 
            $D_{\operatorname{rep.}}$  & $32$ & The dimension of hidden representation. \\ 
            $H$                                   & $5$  & The number of attention head. \\ 
            \bottomrule
        \end{tabular}
        \caption{The meta information of CoLight. }
        \label{table: colight meta information}
    \end{table*}    
    
    \begin{table*}[!h]
        \centering 
        \begin{tabular}{
                @{}p{.29\linewidth}
                p{.4\linewidth} 
                >{\raggedleft\arraybackslash}p{.25\linewidth}@{}
            }
            \toprule
            \textbf{Description} & \textbf{Parameter Size} & \textbf{Instantiation} \\ 
            \midrule 
            
            \multicolumn{3}{l}{\circled{1} The observation embedding layer.}  \\ 
            \quad -- Layer 1 to layer 2 & 
            Linear([$L_{\operatorname{in}}$, $L_{\operatorname{out}}$, $P$], $D_{\operatorname{emb.}}$)
            & $34 \times 32 = 1,088$ \\ 
            \quad -- Layer 2 to layer 3 & 
            Linear($D_{\operatorname{emb.}}$, $D_{\operatorname{emb.}}$) 
            & $33 \times 32 = 1,056$ \\ 
            
            \multicolumn{3}{l}{\circled{2} The mapping of observation embedding to $H$ attention heads. \rule{0pt}{5mm}}  \\ 
            \quad -- $H$ heads & 
            Linear($D_{\operatorname{emb.}}$, $D_{\operatorname{att.}}$) $\times H$
            & $33 \times 32 \times 5 = 5,280$ \\  
            
            \multicolumn{3}{l}{\circled{3} The mapping of $N$ neighbor embeddings to $H$ attention heads. \rule{0pt}{5mm}}  \\ 
            \quad -- $H$ heads            & 
            Linear($D_{\operatorname{emb.}}$, $D_{\operatorname{att.}}$) $\times H$ 
            & $33 \times 32 \times 5 = 5,280$ \\  
            
            \multicolumn{3}{l}{\circled{4} The mapping of embedding to hidden representation w.r.t. each attention head. \rule{0pt}{5mm}}  \\ 
            \quad -- $H$ heads            & 
            Linear($D_{\operatorname{emb.}}$, $D_{\operatorname{rep.}}$) $\times H$
            & $33 \times 32 \times 5 = 5,280$ \\  
            
            \multicolumn{3}{l}{\circled{5} The Q-value prediction layer. \rule{0pt}{5mm}}  \\ 
            \quad -- Layer 1 to layer 2 & Linear($D_{\operatorname{rep.}}$, $D_{\operatorname{rep.}}$) & $33 \times 32 = 1,056$  \\ 
            \quad -- Layer 2 to layer 3 & Linear($D_{\operatorname{rep.}}$, $P$) &  $33 \times 9 = 297$\\
            \midrule 
            \textbf{Total} & & \textbf{19,337} \\ 
            \bottomrule
        \end{tabular}
        \caption{The parameter size of CoLight.}
        \label{table: colight model size}
    \end{table*}

    \begin{table*}[!h]
        \centering
        \begin{tabular}{
                @{}p{.29\linewidth}
                p{.4\linewidth} 
                >{\raggedleft\arraybackslash}p{.25\linewidth}@{}
            }
            \toprule 
            \textbf{Description} & \textbf{FLOPs} & \textbf{Instantiation}\\ 
            \midrule 
            \multicolumn{3}{l}{\circled{1} The observation embedding layer.}  \\
            
            {\quad - Layers for the target}
            & relu({Linear}([$L_{\operatorname{in+out}}$, $P$], $D_{\operatorname{emb.}}$))
            & $67 \times 32 + 64 = 2,208$\\              
            & relu({Linear}($D_{\operatorname{emb.}}$, $D_{\operatorname{emb.}}$))
            & $65 \times 32 + 64 = 2,144$\\
            
            {\quad - Layers for its neighbors}
            & relu({Linear}([$L_{\operatorname{in+out}}$, $P$], $D_{\operatorname{emb.}}$)) \rdelim\}{2}{0mm}[{$\times (N -1)$}]
            & $(2,208 + 2,144) \times 4$
            \\ 
            \quad 
            &{relu}({Linear}($D_{\operatorname{emb.}}$, $D_{\operatorname{emb.}}$)) 
            & $= 17,408$\\ 
            
            \multicolumn{3}{l}{\circled{2} The mapping of observation embedding to $H$ attention heads. \rule{0pt}{5mm}}  \\
            \quad - $H$ heads
            & {relu}({Linear}($D_{\operatorname{emb.}}$, $D_{\operatorname{att.}}$)) $\times H$ 
            & $2,144 \times 5 = 10,720$\\ 
            
            \multicolumn{3}{l}{\circled{3} The mapping of $N$ neighbor embeddings to $H$ attention heads. } \rule{0pt}{5mm} \\ 
            \quad - Dense layer 
            & {relu}({Linear}($D_{\operatorname{emb.}}$, $D_{\operatorname{att.}}$)) 
            $\times N$ 
            ~~~\rdelim\}{3}{6mm}[$\times H$] 
            & $10,720 \times 5 = 53,600$ \\ 
            \quad - Computing logits
            & {matmul}(Mat$_{N \times D_{\operatorname{att.}}}$, Mat$_{D_{\operatorname{att.}} \times 1}$)
            & $2 \times 32 \times 5 \times 5 = 1,600$ \\ 
            
            \quad - Softmax 
            & {softmax}(N) 
            & $3 \times 5 \times 5 = 75$ \\ 
            
            \multicolumn{3}{l}{\circled{4} The mapping of embedding to hidden representation w.r.t. each attention head. \rule{0pt}{5mm}}  \\ 
            \quad - Dense layer 
            & {relu}({Linear} ($D_{\operatorname{emb.}}$, $D_{\operatorname{rep.}}$)) ~~~~~~\rdelim\}{2}{6mm}[$\times H$]
            & $2,144 \times 5 = 10,720$ \\ 
            
            \quad - Weighting by attention 
            & matmul(Mat$_{1 \times N}$, Mat$_{N \times D_{\operatorname{rep.}}}$)
            & $2 \times 32 \times 5 \times 5 = 1,600$ \\ 
            
            \quad - Averaging over heads 
            & $(H_{\operatorname{sum}} + 1_{\operatorname{division}}) \times D_{\operatorname{rep.}}$ 
            & $(5 + 1) \times 32 = 192$\\ 
            
            \multicolumn{3}{l}{\circled{5} The Q-value prediction layer. \rule{0pt}{5mm}}  \\ 
            \quad - Layer 1 to layer 2
            & relu(Linear($D_{\operatorname{rep.}}$, $D_{\operatorname{rep.}}$))
            & $65 \times 32 + 64 = 2,144$ \\ 
            \quad - Layer 2 to layer 3
            & Linear($D_{\operatorname{rep.}}$, $P$)
            & $65 \times 9 = 585$ \\

            \midrule
            \textbf{Total} & & \textbf{102,996} \\
            \bottomrule 
        \end{tabular}
        \caption{The FLOPs of CoLight. }
        \label{table: colight flops}
    \end{table*}

    \subsection{EcoLight} 
    
    EcoLight \cite{ecoLight_Chauhan} is designed for scenarios with extreme budget and network constraints, which partially shares the goal of TinyLight. The authors first construct a model that has a moderate size of parameters. Then, for the deployment in reality, they replace the model with a series of artificially designed thresholds. This simplifies the deployment. However, it also limits the model's representation power.  In this work, we report the result of the original version of EcoLight (defined in Section 3 of \cite{ecoLight_Chauhan}), which stands for its upperbound performance. This version takes two features as input, which are the traffic density on lanes with green signal and red signal, respectively. Its network is constructed with four fully-connected layers. The first layer is the input feature, and the remaining three layers have a shape of 10, 10 and 2, respectively. We present the parameter size and FLOPs of EcoLight  in Table \ref{table: ecolight model size} and \ref{table: ecolight flops}. 
    
    \begin{table*}[!h]
        \centering 
        \begin{tabular}{
                @{}p{.29\linewidth}
                p{.4\linewidth} 
                >{\raggedleft\arraybackslash}p{.25\linewidth}@{}
            }
            \toprule
            \textbf{Description} & \textbf{Parameter Size} & \textbf{Instantiation} \\ 
            \midrule 
            
            Layer 1 to layer 2 & 
            Linear($2$, $10$)
            & $(2 + 1) \times 10 = 30$ \\ 
            Layer 2 to layer 3& 
            Linear($10$, $10$) 
            & $(10 + 1) \times 10 = 110$ \\ 
            
            Layer 3 to output layer & 
            Linear($10$, $2$) 
            & $(10 + 1) \times 2 = 22$ \\
            \midrule 
            \textbf{Total} & & \textbf{162} \\ 
            \bottomrule
        \end{tabular}
        \caption{The parameter size of EcoLight.}
        \label{table: ecolight model size}
    \end{table*}    
    
    \begin{table*}[!h]
        \centering 
        \begin{tabular}{
                @{}p{.29\linewidth}
                p{.4\linewidth} 
                >{\raggedleft\arraybackslash}p{.25\linewidth}@{}
            }
            \toprule
            \textbf{Description} & \textbf{FLOPs} & \textbf{Instantiation} \\ 
            \midrule 
            
            Layer 1 to layer 2 & 
            relu(Linear($2$, $10$))
            & $(2 \times 2 + 3) \times 10 = 70$ \\ 
            Layer 2 to layer 3& 
            relu(Linear($10$, $10$)) 
            & $(2 \times 10 + 3) \times 10 = 230$ \\ 
            
            Layer 3 to output layer & 
            Linear($10$, $2$) 
            & $(2 \times 10 + 1) \times 2 = 42$ \\
            \midrule 
            \textbf{Total} & & \textbf{342} \\ 
            \bottomrule
        \end{tabular}
        \caption{The FLOPs of EcoLight.}
        \label{table: ecolight flops}
    \end{table*}    
    
    \subsection{FixedTime}
    
    FixedTime \cite{fixed_cycle} switches phases with regular and consistent intervals, which is widely adopted in the real world. The idea of FixedTime can be realized with a simple digital timer. Therefore, its parameter size and FLOPs are ignored in this work. 
    
    \subsection{FRAP} 
    
    FRAP \cite{frap_zheng} aims to improve the transportation efficiency with a procedure called \textit{phase competition}. Briefly, this procedure compares phases in pairs and gives priority to the one with larger traffic movement. In this way, the model becomes invariant to symmetric operations such as flipping and rotation on the target intersection. To implement this idea, the authors design a cube to store the embedding of all phase pairs. Based on this cube, the relationships between each pair of competing phases are extracted with a series of $1\times 1$ convolution kernels. We first list the meta information of FRAP in Table \ref{table: frap meta information}. Then, its parameter size and FLOPs are presented in Table \ref{table: frap model size} and \ref{table: frap flops}, respectively. This implementation is in alignment with the authors' released code\footnote{\url{https://github.com/gjzheng93/frap-pub}}. 
    
    \begin{table*}[!h]
        \centering 
        \begin{tabular}{lrl}
            \toprule
            \textbf{Symbol} & \textbf{Value} & \textbf{Description}  \\ 
            \midrule
            $D_{\operatorname{emb.}}$ & $4$ & The dimension of observation embedding. \\ 
            $D_{\operatorname{rep.}}$  & $16$ & The dimension of hidden representation. \\ 
            $D_{\operatorname{2\times rep.}}$  & $32$ & The total dimension of a pair of competing phases. \\ 
            $D_{\operatorname{conv.}}$  & $20$ & The dimension of intermediate representation for the output of Conv2d. \\ 
            \bottomrule
        \end{tabular}
        \caption{The meta information of FRAP. }
        \label{table: frap meta information}
    \end{table*}

    \begin{table*}[!h]
        \centering 
        \begin{tabular}{
                @{}p{.34\linewidth}
                p{.4\linewidth} 
                >{\raggedleft\arraybackslash}p{.2\linewidth}@{}
            }
            \toprule
            \textbf{Description} & \textbf{Parameter Size} & \textbf{Instantiation} \\ 
            \midrule 
            
            \multicolumn{3}{l}{\circled{1} The observation embedding layer.}  \\ 
            \quad -- Phase embedding & 
            Linear(1, $D_{\operatorname{emb.}}$)
            & $2 \times 4 = 8$ \\ 
            \quad -- Vehicle embedding & 
            Linear(1, $D_{\operatorname{emb.}}$) 
            & $2 \times 4 = 8$ \\ 
            \quad -- Lane-link embedding & 
            Linear([$D_{\operatorname{emb.}}$, $D_{\operatorname{emb.}}$], $D_{\operatorname{rep.}}$)
            & $9 \times 16 = 144$ \\ 
            \quad -- Relationship embedding & 
            Linear(1, $D_{\operatorname{emb.}}$)
            & $2 \times 4 = 8$ \\ 
            
            \multicolumn{3}{l}{\circled{2} The convolution kernels.  \rule{0pt}{5mm}}  \\ 
            \quad - Pair of phases & 
            Conv2d($D_{\operatorname{2\times rep.}}$, $D_{\operatorname{conv.}}$)
            & $33 \times 20 = 660$ \\  
            \quad - Competition mask            & 
            Conv2d($D_{\operatorname{emb.}}$, $D_{\operatorname{conv.}}$) 
            & $5 \times 20 = 100$ \\  
            
            \multicolumn{3}{l}{\circled{3} The Q-value prediction layer. \rule{0pt}{5mm}}  \\ 
            \quad -- Layer 1 to layer 2 & Conv2d($D_{\operatorname{conv.}}$, $D_{\operatorname{conv.}}$) & $21 \times 20 = 420$  \\ 
            \quad -- Layer 2 to layer 3 & Conv2d($D_{\operatorname{conv.}}$, $1$) &  $21 \times 1 = 21$\\
            \midrule 
            \textbf{Total} & & \textbf{1,369} \\ 
            \bottomrule
        \end{tabular}
        \caption{The parameter size of FRAP.}
        \label{table: frap model size}
    \end{table*}

    \begin{table*}[!h]
        \centering
        \begin{tabular}{@{}
                p{.31\linewidth}
                p{.4\linewidth} 
                >{\raggedleft\arraybackslash}p{.23\linewidth}@{}
            }
            \toprule 
            \textbf{Description} & \textbf{FLOPs} & \textbf{Instantiation}\\ 
            \midrule 
            
            \multicolumn{3}{l}{\circled{1} The observation embedding layer.}  \\ 
            \quad -- Phase embedding & 
            relu(Linear(1, $D_{\operatorname{emb.}}$))
            & $5 \times 4 = 20$ \\ 
            \quad -- Vehicle embedding & 
            relu(Linear(1, $D_{\operatorname{emb.}}$)) 
            & $5 \times 4 = 20$ \\ 
            \quad -- Lane-link embedding & 
            relu(Linear([$D_{\operatorname{emb.}}$, $D_{\operatorname{emb.}}$], $D_{\operatorname{rep.}}$))
            & $19 \times 16 = 304$ \\ 
            \quad -- Relationship embedding & 
            relu(Linear(1, $D_{\operatorname{emb.}}$))
            & $5 \times 4 = 20$ \\ 
            
            \multicolumn{3}{l}{\circled{2} The convolution operation.  \rule{0pt}{5mm}}  \\ 
            \quad - Pair of phases & 
            Conv2d($D_{\operatorname{2\times rep.}}$, $D_{\operatorname{conv.}}$, $P$, $P - 1$)
            & $65 \times 20 \times 9 \times  8 = 93,600$ \\  
            \quad - Competition mask            & 
            Conv2d($D_{\operatorname{emb.}}$, $D_{\operatorname{conv.}}$, $P$, $P - 1$) 
            & $9 \times 20 \times 9 \times 8 = 12,960$ \\  
            
            \multicolumn{3}{l}{\circled{3} The Q-value prediction layer. \rule{0pt}{5mm}}  \\ 
            \quad -- Layer 1 to layer 2 & relu(Conv2d($D_{\operatorname{conv.}}$, $D_{\operatorname{conv.}}$, $P$, $P - 1$)) & $41 \times 20 \times 9 \times 8 + 2 \times 20 \times 9 \times 8 = 61,920$  \\ 
            \quad -- Layer 2 to layer 3 & Conv2d($D_{\operatorname{conv.}}$, $1$, $P$, $P - 1$) &  $41 \times 1 \times 9 \times 8 = 2,952$\\            
            
            \multicolumn{3}{l}{\circled{4} Intermediate operations. \rule{0pt}{5mm}}  \\
            \quad -- Phase-based aggregation  
            & matmul(Mat$_{D_{\operatorname{rep.}} \times L_{\operatorname{link}}}$, Mat$_{L_{\operatorname{link}} \times P}$) 
            & $2 \times 36 \times 16 \times 9 = 10,368$\\ 
            
            \quad -- Mask over cube 
            & $D_{\operatorname{rep.}} \times P \times (P - 1)$
            & $20 \times 9 \times 8 = 1,440$\\ 
            
            \quad -- Summation over phase 
            &  $P \times (P - 1)$
            &  $9 \times 8 = 72$ \\ 
            
            \midrule
            \textbf{Total} & & \textbf{183,676} \\
            \bottomrule 
        \end{tabular}
        \caption{The FLOPs of FRAP.}
        \label{table: frap flops}
    \end{table*}

    \subsection{MaxPressure}
    
    MaxPressure \cite{max_pressure} is a rule-based model, which chooses next phase by the one with the largest pressure. This can be implemented with $P$ accumulators. Its computation involves $L_{\operatorname{in}} + L_{\operatorname{out}}$ summations and $P$ comparisons for each decision. 
    
    \subsection{MPLight}
    
    MPLight \cite{mplight_aaai} addresses to solve the adaptive traffic signal control problem that is of large scale. This model adopts FRAP \cite{frap_zheng} as its prototype and modify it by (1) using pressure to coordinate with neighboring intersections, and (2) using parameter sharing to accelerate its training. For the implementation of MPLight on a single intersection, these modifications do not introduce additional overhead in computation and storage when comparing to FRAP. Therefore, MPLight shares the same parameter size and FLOPs with FRAP. 
    
    \subsection{SOTL} 
    
    SOTL \cite{sotl} is a rule-based model, which counts the vehicles on incoming lanes that have (or not have) the right of way with respect to current phase. These two numbers are then compared with a pair of predefined thresholds to determine whether it is appropriate to switch to the next phase. Therefore,  the parameters of SOTL consist of 2 predefined thresholds. Its computation takes $L_{\operatorname{in}}$ additions and 2 comparisons in total. 
    
    \subsection{TinyLight}\label{sec: tinylight}
    
    The output of TinyLight is a four-layer perceptron with two features. To minimize the resource consumption, TinyLight only involves basic structures/operations such as Linear and relu. We sample an instance of TinyLight from our experiments in Jinan dataset. Table \ref{table: tinylight meta information} lists the meta information of our model. Its parameter size and FLOPs are presented in Table \ref{table: tinylight model size} and \ref{table: tinylight flops}, respectively. 
    
    \begin{table*}[!h]
        \centering 
        \begin{tabular}{lrl}
            \toprule
            \textbf{Symbol} & \textbf{Value} & \textbf{Description}  \\ 
            \midrule
            $D_{\operatorname{1-fea.}}$ & $12$ & The dimension of first feature. \\ 
            $D_{\operatorname{2-fea.}}$  & $9$ & The dimension of second feature. \\ 
            $D_{\operatorname{2-lay.}}$  & $18$ & The dimension of second layer. \\ 
            $D_{\operatorname{3-lay.}}$  & $20$ & The dimension of third layer. \\ 
            \bottomrule
        \end{tabular}
        \caption{The meta information of TinyLight. }
        \label{table: tinylight meta information}
    \end{table*}    
    
    \begin{table*}[!h]
        \centering 
        \begin{tabular}{
                @{}p{.32\linewidth}
                p{.4\linewidth} 
                >{\raggedleft\arraybackslash}p{.22\linewidth}@{}
            }
            \toprule
            \textbf{Description} & \textbf{Parameter Size} & \textbf{Instantiation} \\ 
            \midrule 
            
            Feature 1 to layer 2 & 
            Linear($D_{\operatorname{1-fea.}}$, $D_{\operatorname{2-lay.}}$)
            & $(12 + 1) \times 18 = 234$ \\ 
            Feature 2 to layer 2 & 
            Linear($D_{\operatorname{2-fea.}}$, $D_{\operatorname{2-lay.}}$)
            & $(10 + 1) \times 18 = 198$ \\
            Layer 2 to layer 3 & 
            Linear($D_{\operatorname{2-lay.}}$, $D_{\operatorname{3-lay.}}$)
            & $(18 + 1 ) \times 20 = 380 $\\  
            Layer 3 to output layer & 
            Linear($D_{\operatorname{3-lay.}}$, $P$) 
            & $(20 + 1) \times 9 = 189$ \\
            \midrule 
            \textbf{Total} & & \textbf{1,001} \\ 
            \bottomrule
        \end{tabular}
        \caption{The parameter size of TinyLight.}
        \label{table: tinylight model size}
    \end{table*}    
    
    \begin{table*}[!h]
        \centering 
        \begin{tabular}{
                @{}p{.32\linewidth}
                p{.4\linewidth} 
                >{\raggedleft\arraybackslash}p{.22\linewidth}@{}
            }
            \toprule
            \textbf{Description} & \textbf{FLOPs} & \textbf{Instantiation} \\ 
            \midrule 
            
            Feature 1 to layer 2 & 
            relu(Linear($D_{\operatorname{1-fea.}}$, $D_{\operatorname{2-lay.}}$))
            & $(2 \times 12 + 3) \times 18 = 486$ \\ 
            Feature 2 to layer 2 & 
            relu(Linear($D_{\operatorname{2-fea.}}$, $D_{\operatorname{2-lay.}}$))
            & $(2 \times 9 + 3) \times 18 = 378$ \\ 
            Element-wise sum on layer 2 & 
            $D_{\operatorname{2-lay.}}$ 
            & 18 \\ 
            Layer 2 to layer 3& 
            relu(Linear($D_{\operatorname{2-lay.}}$, $D_{\operatorname{3-lay.}}$)) 
            & $(2 \times 18 + 3) \times 20 = 780$ \\ 
            
            Layer 3 to output layer & 
            Linear($D_{\operatorname{3-lay.}}$, $P$) 
            & $(2 \times 20 + 1) \times 9 = 369$ \\
            \midrule 
            \textbf{Total} & & \textbf{2,031} \\ 
            \bottomrule
        \end{tabular}
        \caption{The FLOPs of TinyLight.}
        \label{table: tinylight flops}
    \end{table*}    
%    \fi 
\end{document}